\newcommand{\beq}{\begin{equation}}
\newcommand{\eeq}{\end{equation}}
\newcommand{\beqa}{\begin{eqnarray}}
\newcommand{\eeqa}{\end{eqnarray}}
\newcommand{\beqan}{\begin{eqnarray*}}
\newcommand{\eeqan}{\end{eqnarray*}}
\newcommand{\als}[1]{ \begin{align*} #1  \end{align*}}
\newcommand{\sk}{\nonumber\\}
\newtheorem{assumption}{Assumption}
\newcommand{\argmax}{\mathop{\mathrm{argmax}}}
\newcommand{\Argmax}{\mathop{\mathrm{Argmax}}}
\newcommand{\argmin}{\mathop{\mathrm{argmin}}}
\newcommand{\bA}{\mathbb{A}}
\newcommand{\bI}{\mathbb{I}}
\newcommand{\UCRL}{{\textcolor{red!50!black}{\texttt{UCRL2}}}}
\newcommand{\UCRLLaplace}{{\textcolor{red!50!black}{\texttt{UCRL2-L}}}}
\newcommand{\CUCRL}{{\textcolor{red!50!black}{\texttt{C-UCRL}}}}
\newcommand{\ClusteringAlgo}{{\textcolor{red!50!black}{\texttt{ApproxEquivalence}}}}
\newcommand{\bH}{\beta'}
\newcommand{\bW}{\beta}
\newcommand{\ep}{\hfill$\blacksquare$}
\title[Model-Based RL Exploiting Equivalences]{Model-Based Reinforcement Learning Exploiting State-Action Equivalence}
 \author[Asadi et al.]{\Name{Mahsa Asadi} \Email{mahsa.asadi@inria.fr}\\
   \Name{Mohammad Sadegh Talebi} \Email{sadegh.talebi@inria.fr}\\
   \Name{Hippolyte Bourel} \Email{hippolyte.bourel@ens-rennes.fr}\\
   \Name{Odalric-Ambrym Maillard} \Email{odalric.maillard@inria.fr}\\
   \addr Inria Lille -- Nord Europe}
\begin{document}

\maketitle

\begin{abstract}
Leveraging an equivalence property in the state-space of a Markov Decision Process (MDP) has been investigated in several studies. This paper studies equivalence structure  in the reinforcement learning (RL) setup, where transition distributions are no longer assumed to be known. We present a notion of similarity between transition probabilities of various state-action pairs of an MDP, which naturally defines an equivalence structure in the state-action space. We present equivalence-aware confidence sets for the case where the learner knows the underlying structure in advance. These sets are provably smaller than their corresponding equivalence-oblivious counterparts. In the more challenging case of an unknown equivalence structure, we present an algorithm called \ClusteringAlgo\ that seeks to find an (approximate) equivalence structure, and define confidence sets using the approximate equivalence. To illustrate the efficacy of the presented confidence sets, we present \CUCRL, as a natural modification of \UCRL\ for RL in undiscounted MDPs. In the case of a known equivalence structure, we show that \CUCRL\ improves over \UCRL\ in terms of \emph{regret} by a factor of $\sqrt{SA/C}$, in any communicating  MDP with $S$ states, $A$ actions, and $C$ classes, which corresponds to a massive improvement when $C\ll SA$. To the best of our knowledge, this is the first work providing regret bounds for RL when an equivalence structure in the MDP is efficiently exploited. In the case of an unknown equivalence structure, we show through numerical experiments that \CUCRL\ combined with \ClusteringAlgo\ outperforms \UCRL\ in ergodic MDPs.
\end{abstract}
\begin{keywords}
Reinforcement Learning, Regret, Confidence Bound, Equivalence.
\end{keywords}

\section{Introduction}\label{sec:Introduction}
This paper studies the Reinforcement Learning (RL) problem, where an agent interacts with an unknown environment in a single stream of observations, with the aim of maximizing the cumulative reward gathered over the course of experience. The environment is modeled as a Markov Decision Process (MDP), with finite state and action spaces, as considered in most literature; we refer to \citep{puterman2014markov,sutton1998reinforcement} for background materials on MDPs, and to Section \ref{sec:equivalence_class}. In order to act optimally or nearly so, the agent needs to learn the parameters of the MDP using the observations from the environment. The agent thus faces a fundamental trade-off between \emph{exploitation vs.~exploration}: Namely, whether to gather more experimental data about the consequences of the actions (exploration) or acting consistently with past observations to maximize the rewards (exploitation); see \citep{sutton1998reinforcement}.
Over the past two decades, a plethora of studies have addressed the above RL problem in the undiscounted setting, where the goal is to minimize the regret (e.g., \citet{bartlett2009regal,jaksch2010near,azar2017minimax}), or in the discounted setting (as in, e.g., \citet{strehl2008analysis}) with the goal of bounding the sample complexity of exploration as defined in \citep{kakade2003phd}. 
In most practical situations, the state-space of the underlying MDP is too large, but often endowed with some \emph{structure}. Directly applying the state-of-the-art RL algorithms, for instance from the above works, and ignoring the structure would lead to a prohibitive regret or sample complexity.


In this paper, we consider RL problems where the state-action space of the underlying MDP exhibits some \emph{equivalence structure}. This is quite typical in many MDPs in various application domains. For instance, in a grid-world MDP when taking action `up' from state $s$ or `right' from state $s'$ when both are away from any wall may result in similar transitions (typically, move towards the target state with some probability, and stay still or transit to other neighbors with the remaining probability); see, e.g., Figure \ref{fig:equivalenceexample} in Section \ref{sec:equivalence_class}. We are interested in exploiting such a structure in order to speed up the learning process. Leveraging an equivalence structure is popular in the MDP literature; see \citep{ravindran2004approximate,li2006towards,abel2016near}. However, most notions are unfortunately not well adapted to the RL setup, that is when the underlying MDP is \emph{unknown}, as opposed to the known MDP setup. In particular, amongst those considering such structures, to our knowledge,  none has provided performance guarantees in terms of regret or sample complexity. Our goal is to find a near-optimal policy, with controlled regret or sample complexity. To this end, we follow a model-based approach, which is popular in the RL literature, and aim at providing a generic model-based approach capable of exploiting this structure, to speed up  learning. We do so by aggregating the information of state-action pairs in the same equivalence class when estimating the transition probabilities or reward function of the MDP.

\paragraph{Contributions.} We make the following contributions. (i) We first introduce a notion of similarity between state-action pairs, which  naturally yields a \emph{partition} of the state-action space $\cS\times \cA$, and induces an equivalence structure in the MDP (see Definition \ref{def:sim}--\ref{def:equi_class}). To our knowledge, while other notions of equivalence have been introduced, our proposed definition appears to be the first, in a discrete RL setup, explicitly using profile (ordering) of distributions. (ii) We present confidence sets that incorporate equivalence structure of transition probabilities and reward function into their definition, when the learner has access to such information. These confidence sets are smaller than those obtained by ignoring equivalence structures. (iii) In the case of an unknown equivalence structure, we present \ClusteringAlgo, which uses confidence bounds of various state-action pairs as a proxy to estimate an empirical equivalence structure of the MDP. (iv) Finally, in order to demonstrate the application of the above equivalence-aware confidence sets, we present \CUCRL, which is a natural modification of the \UCRL\ algorithm \citep{jaksch2010near} employing the presented confidence sets. As shown in Theorem \ref{thm:regretKnownC}, when the learner knows the equivalence structure, \CUCRL\ achieves a regret which is smaller than that of \UCRL\ by a factor of $\sqrt{SA/C}$, where $C$ is the number of classes. This corresponds to a massive improvement when $C\ll SA$. We also verify, through numerical experiments, the superiority of \CUCRL\ over \UCRL\ in the case of an unknown equivalence structure.

\paragraph{Related Work.}
There is a rich literature on state-abstraction (or state-aggregation) in MDPs; we refer to  \citep{li2006towards} on earlier methods, and to \citep{abel2016near} for a good survey of recent approaches. \citep{ravindran2004approximate} introduces aggregation based on homo-morphisms of the model, but with no algorithm nor regret analysis. \citep{dean1997model,givan2003equivalence} consider a partition of state-space of MDPs based on the notion of \emph{stochastic bi-simulation}, which is a generalization of the notion of bi-simulation from the theory of concurrent processes to stochastic processes. This path is  further followed in \citep{ferns2004metrics,ferns2011bisimulation}, where \emph{bi-simulation metrics} for capturing similarities are presented. Bi-simulation metrics can be thought of as quantitative analogues of the equivalence relations, and suggest to resort to optimal transport, which is intimately linked with our notions of similarity and equivalence (see Definition \ref{def:sim}).  
However, these powerful metrics have only been studied in the context of a \emph{known} MDP, and not the RL setup.
The approach in \citep{anand2015asap} is similar to our work in that it considers state-action equivalence. Unlike the present paper, however, it does not consider orderings, transition estimation errors, or regret analysis. Another relevant work to our approach is \citep{ortner2013adaptive} on aggregation of states (but not of pairs, and with no ordering) based on concentration inequalities, a path that we follow. We also mention the works \citep{brunskill2013sample,mandel2016efficient}, where clustering of the state-space is studied. As other relevant works, we refer to  \citep{leffler2007efficient}, where \emph{relocatable action model} is introduced, and to \citep{diuk2009adaptive} that studies RL in the simpler setting of factored MDPs. We also mention interesting works revolving around complementary RL questions including the one on selection amongst different state representations in  \citep{ortner2014selecting} and on state-aliasing in \citep{hallak2013model}.

As part of this paper is devoted to presenting an equivalence structure aware variant of \UCRL, we provide here a brief review of the literature related to \emph{undiscounted}  RL. Undiscounted RL dates back at least to \citep{burnetas1997optimal}, and is thoroughly investigated later on in \citep{jaksch2010near}. The latter work presents \UCRL, which is inspired by multi-armed bandit algorithms. Several studies continued this line,  including \citep{bartlett2009regal,maillard2014hard,azar2017minimax,dann2017unifying,talebi2018variance,fruit2018efficient}, to name a few. Most of these works present \UCRL-style algorithms, and try to reduce the regret dependency on the number of states, as in, e.g.,
 \citep{azar2017minimax,dann2017unifying} (restricted to the episodic RL with a fixed and known horizon). Although the concept of equivalence is well-studied in MDPs, no work seems to have investigated the possibility of defining an aggregation that both is based on state-action pairs (instead of states only) for RL problems, and uses optimal transportation maps combined with
statistical tests. Especially, the use of profile maps seems novel and we show it is also effective.

\section{Model and Equivalence Classes}
\label{sec:equivalence_class}

\subsection{The RL Problem}
In this section, we describe the RL problem, which we study in this paper. Let $\cM = (\cS,\cA,p,\nu)$ be an undiscounted MDP\footnote{Our results can be extended to the discounted case as well.}, where $\cS$ denotes the discrete state-space with cardinality $S$, and $\cA$ denotes the discrete action-space with cardinality $A$. Here, $p$ represents the transition kernel such that $p(s'|s,a)$ denotes the probability of transiting to state $s'$, starting from state $s$ and executing action $a$. Finally,  $\nu$ is a reward distribution function on $[0,1]$, whose mean is denoted by $\mu$.

The game proceeds as follows. The learner starts in some state $s_1\in \cS$ at time $t = 1$. At each time step $t\in \mathbb N$, the learner chooses one action $a\in \cA$ in its current state $s_t$ based on its past decisions and observations. When executing action $a_t$ in state $s_t$, the learner receives a random reward $r_t:=r_t(s_t,a_t)$ drawn independently from distribution $\nu(s_t, a_t)$, and whose mean is $\mu(s_t,a_t)$. The state then transits to a next state $s_{t+1}\sim p(\cdot|s_t,a_t)$, and a new decision step begins.  We refer to \citep{sutton1998reinforcement,puterman2014markov} for background material on MDPs and RL. The goal of the learner is to maximize the \textit{cumulative reward} gathered in the course of interaction with the environment. As  $p$ and $\nu$ are unknown, the learner has to learn them by trying different actions and recording the realized rewards and state transitions.
The performance of the learner can be assessed through the notion of \emph{regret}\footnote{We note that in the discounted setting, the quality of a learning algorithm is usually assessed through the notion of \emph{sample complexity} as defined in \citep{kakade2003phd}.} with respect to an optimal oracle, being aware of $p$ and $\nu$. More formally, as in \citep{jaksch2010near}, under a learning algorithm $\bA$, we define the $T$-step regret as
\als{
\kR(\bA,T) := Tg_\star - \sum_{t=1}^T r_t(s_t,a_t)\,,
}
where $g_\star$ denotes the \emph{average reward} (or \emph{gain}\footnote{See, e.g., \citep{puterman2014markov} for background material on MDPs.}) attained by an optimal policy, and where $a_t$ is chosen by $\bA$ as a function of $((s_{t'},a_{t'})_{t'<t}, s_t)$. Alternatively, the objective of the learner is to minimize the regret, which calls for balancing between exploration and exploitation. In the present work, we are interested in exploiting \textit{equivalence structure} in the state-action space in order to speed up exploration, which, in turn, reduces the regret.



\subsection{Similarity and Equivalence Classes}
We now present a precise definition of the equivalence structure considered in this paper. We first introduce a notion of similarity between state-action pairs of the MDP:

\begin{definition}[Similar state-action pairs]\label{def:sim}
	The pair $(s',a')$ is said to be $\epsilon$-similar to the pair $(s,a)$, for
	$\epsilon=(\epsilon_p,\epsilon_\mu)\!\in\!\Real_+^2$,  if
	\als{	
	\qquad\| p(\sigma_{s,a}(\cdot)|s,a) -p(\sigma_{s',a'}(\cdot)|s',a')\|_1 \leq \epsilon_p\,\quad	\text{and}\quad |\mu(s,a)-\mu(s',a')| \leq \epsilon_{\mu}\,,
	}
	where $\sigma_{s,a} : \{1,\dots,S\} \to \cS$ indexes a permutation of states such that $p(\sigma_{s,a}(1) |s,a) \geq p(\sigma_{s,a}(2) |s,a) \geq \dots \geq p(\sigma_{s,a}(S) |s,a)$. We refer to $\sigma_{s,a}$ as a \textbf{\emph{profile mapping}} (or for short, \emph{\textbf{profile}}) for $(s,a)$, and denote by $\boldsymbol \sigma = (\sigma_{s,a})_{s,a}$ the set of profile mappings of all pairs.
\end{definition}

The notion of similarity introduced above naturally yields a \emph{partition} of the state-action space $\cS\times \cA$, as detailed in the following definition:
\vspace{-2mm}

\begin{definition}[Equivalence classes]\label{def:equi_class}
$(0,0)$-similarity is an equivalence relation and induces a canonical partition of $\cS\times\cA$. We refer to such a canonical partition as \textbf{\emph{equivalence classes}} or \textbf{\emph{equivalence structure}}, denote it by $\cC$, and let $C:=|\cC|$.
\end{definition}
\vspace{-3mm}

In order to help understand Definitions \ref{def:sim} and \ref{def:equi_class}, we present in Figure~\ref{fig:equivalenceexample} an MDP  with 13 states, where the state-action pairs (6,\textsf{Up}) and (8,\textsf{Right}) are equivalent up to a permutation: Let the permutation $\sigma$ be such that $\sigma(2)=9$, $\sigma(6)=8$, and $\sigma(i)=i$ for all $i\neq 2,6$. Now $p(\sigma(x)|6,\textsf{Up})=p(x|8,\textsf{Right})$ for all $x\in \cS$, and thus, the pairs (8,\textsf{Right}) and (6,\textsf{Up}) belong to the same class.

\vspace{-1mm}
\begin{figure}[h]
	\begin{minipage}[r]{0.5\textwidth}
    \begin{center}
		\includegraphics[trim={0mm, 5mm, 0mm, 2mm},clip, width=\textwidth]{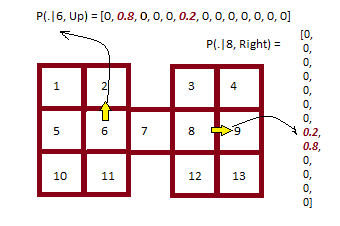}
    \end{center}
	\end{minipage}\hfill
	\begin{minipage}[l]{0.5\textwidth}	
        \begin{center}
		\caption{A grid-world MDP showing similar transitions from state-action pairs (6,\textsf{Up}) and (8,\textsf{Right}).}
		\label{fig:equivalenceexample}
        \end{center}
	\end{minipage}
\end{figure}
\vspace{-3mm}

\begin{remark}
	Crucially, the equivalence relation is not only stated about states, but about \emph{state-action pairs}.
	For instance, pairs (6,\textsf{Up}) and (8,\textsf{Right})  in this example are in the same class although corresponding to playing different actions in different states.
\end{remark}
\vspace{-3mm}

\vspace{-3mm}
\begin{remark}
The profile mapping $\sigma_{s,a}$ in Definition \ref{def:sim} may not be unique in general, especially if distributions have sparse supports. For ease of presentation, in the sequel we assume that the restriction of $\sigma_{s,a}$ to the support of $p(\cdot|s,a)$ is uniquely defined. We also remark that Definition~\ref{def:sim} can be easily generalized by replacing the $\|\cdot\|_1$ norm with other contrasts, such as the KL divergence, squared distance, etc.
\end{remark}
\vspace{-3mm}

In many environments considered in RL with large state and action spaces, the number $C$ of equivalent classes of state-action pairs using Definitions~\ref{def:sim}--\ref{def:equi_class} stays small even for large $SA$, thanks to the profile mappings. This is the case in typical grid-world MDPs as well as in \emph{RiverSwim} shown in Figure \ref{fig:river_swim}. For example, in \emph{Ergodic RiverSwim} with $L$ states, we have $C=6$. We also refer to Appendix \ref{app:examples} for additional illustrations of grid-world MDPs. This remarkable feature suggests that leveraging this structure may yield significant speed-up in terms of learning guarantees if well-exploited.

\vspace{-1mm}
\section{Equivalence-Aware Confidence Sets}
\label{sec:confidence_bounds}
We are now ready to present an approach that defines confidence sets for $p$ and $\mu$ taking into account the equivalence structure in the MDP. The use of confidence bounds in a model-based approach is related to strategies implementing the \emph{optimism in the face of uncertainty} principle, as in stochastic bandit problems \citep{lai1985asymptotically,auer2002finite}. Such an approach relies on maintaining a set of plausible MDPs (models) that are consistent with the observations gathered, and where the set contains the true MDP with high probability. Exploiting equivalence structure of the MDP, one could obtain a more precise estimation of mean reward $\mu$ and transition kernel $p$ of the MDP by \emph{aggregating} observations from various state-action pairs in the same class. This, in turn, yields \emph{smaller} (hence, better) sets of models.

\paragraph{Notations.}We introduce some necessary notations. Under a given algorithm, for a pair $(s,a)$, we denote by $N_t(s,a)$ the total number of observations of $(s,a)$ up to time $t$. Let us define $\widehat \mu_{t}(s,a)$ as the empirical mean reward built using $N_t(s,a)$ i.i.d.~samples from $\nu(s,a)$, and $\widehat p_t(\cdot|s,a)$ as the empirical distribution built using $N_t(s,a)$ i.i.d.~observations from $p(\cdot|s,a)$. For a set $c\subseteq \cS\times \cA$, we denote by $n_t(c)$ the total number of observations of pairs in $c$ up to time $t$, that is $n_t(c) := \sum_{(s,a)\in c} N_t(s,a)$. For $c \subseteq \cS\times \cA$, we further denote by $\widehat \mu_{t}(c)$ and $\widehat p_t(\cdot|c)$ the empirical mean reward and transition probability built using $n_t(c)$ samples, respectively; we provide precise definitions of $\widehat\mu_t(c)$ and $\widehat p_t(\cdot|c)$ later on in this section.

\begin{figure}[t]
\centering
\tiny
\def\svgwidth{0.7\columnwidth}
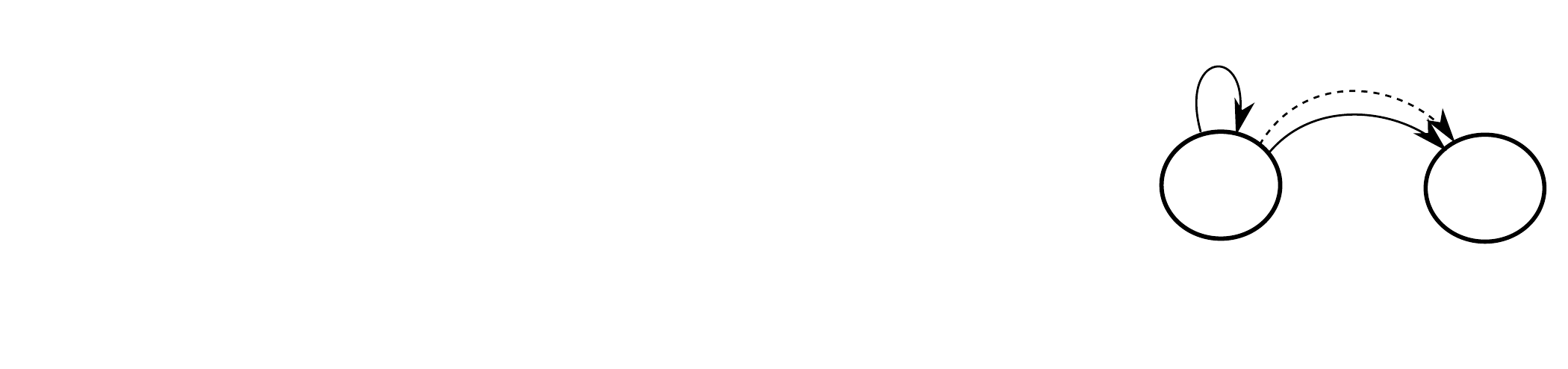
\caption{The $L$-state \emph{Ergodic RiverSwim} MDP}\label{fig:river_swim}
\end{figure}


For a given confidence parameter $\delta$ and time $t$, we write $\cM_{t,\delta}$ to denote the set of plausible MDPs at time $t$, which may be generically expressed as
\begin{equation}\label{eq:model_set_UCRL2}
\cM_{t,\delta} = \{(\cS,\cA,p',\nu'): p'\in \textsf{CB}_{t,\delta} \hbox{ and } \mu'\in \textsf{CB}'_{t,\delta} \}\,,
\end{equation}
where $\textsf{CB}_{t,\delta}$ (resp.~$\textsf{CB}'_{t,\delta}$) denotes the confidence set for $p$ (resp.~$\mu$) centered at $\widehat p$ (resp.~$\widehat \mu$), and where $\mu'$ is the mean of $\nu'$. Note that both $\textsf{CB}_{t,\delta}$ and $\textsf{CB}'_{t,\delta}$ depend on $N_t(s,a), (s,a)\in \cS\times \cA$. For ease of presentation, in the sequel we consider the following confidence sets\footnote{The approach presented in this section can be extended to other concentration inequalities, as well.} used in several model-based RL algorithms, e.g., \citep{jaksch2010near,dann2017unifying}: 
\begin{align}
\label{eq:L1_CB}
\textsf{CB}_{t,\delta}&:= \big\{p': \|\widehat p_{t}(\cdot|s,a)  - p'(\cdot|s,a) \|_1 \leq \bW_{N_t(s,a)}\big(\tfrac{\delta}{SA}\big), \, \forall s,a\big\}\, , \\
\textsf{CB}'_{t,\delta}&:= \big\{\mu':|\widehat \mu_t(s,a)  - \mu'(\cdot|s,a)| \leq \bH_{N_t(s,a)}\big(\tfrac{\delta}{SA}\big), \, \forall s,a\big\}\, , \qquad \hbox{ where} \nonumber
\end{align}
\vspace{-3mm}
\vspace{-3mm}
\begin{align}\label{eq:beta_n}
\bW_n(\delta)\!=\!\sqrt{\frac{2(1+\frac{1}{n})\log\big(\sqrt{n+1}\frac{2^S-2}{\delta}\big)}{n} }
\quad \text{and} \quad  \bH_n(\delta)=\sqrt{ \frac{(1+\frac{1}{n})\log(\sqrt{n+1}/\delta)}{2n}}\,,\,\,\forall n.
\end{align}
These confidence sets were derived by combining Hoeffding's \citep{hoeffding1963probability} and Weissman's \citep{weissman2003inequalities} concentration inequalities with the Laplace method \citep{pena2008self,abbasi2011improved}, which enables to handle the random stopping times $N_t(s,a)$ in a sharp way; we refer to \citep{maillard2019mathematics} for further discussion.
In particular, this ensures that the true transition function $p$ and mean reward function $\mu$ are contained in the confidence sets with probability at least $1-2\delta$, uniformly over all time $t$.

\vspace{-3mm}
\begin{remark}
As the bounds for $\mu$ and $p$ are similar, to simplify the presentation, from now on  we assume the mean reward function $\mu$ is known\footnote{This is a common assumption in the RL literature; see, e.g., \citep{bartlett2009regal}.}.
\end{remark}
\vspace{-3mm}

We now provide modifications to $\textsf{CB}_{t,\delta}$ in order to exploit the equivalence structure $\cC$, when the learner knows $\cC$ in advance. The case of an unknown $\cC$ is addressed in Section \ref{sec:clustering}. 

\subsection{Case 1: Known Classes and Profiles}
Assume that an oracle provides the learner with a perfect knowledge of the equivalence classes $\cC$ as well as profiles $\boldsymbol{\sigma} = (\sigma_{s,a})_{s,a}$. In this ideal situation, the knowledge of $\cC$ and $\boldsymbol\sigma$ allows to straightforwardly aggregate observations from all state-action pairs in the same class to build more accurate estimates of $p$ and $\mu$. Formally, for a class $c\subset \cC$, we define
\begin{align}\label{eq:p_hat_C_sigma}
\widehat p^{\boldsymbol{\sigma}}_{t} ( x |c)  &=\frac{1}{n_t(c)}\sum_{(s,a) \in c}  N_t(s,a) \widehat p_{t}\big(\sigma_{s,a}(x)|s,a\big)\,,\qquad \forall x\in \cS,
\end{align}
where we recall that $n_t(c)= \sum_{(s,a)\in c} N_t(s,a)$. The superscript $\boldsymbol\sigma$ in (\ref{eq:p_hat_C_sigma})  signifies that the aggregate empirical distribution $\widehat p^{\boldsymbol{\sigma}}_{t}$ depends on $\boldsymbol\sigma$.
Having defined $\widehat p^{\boldsymbol{\sigma}}_{t}$, we modify the confidence set (\ref{eq:L1_CB}) by modifying the $L_1$ bound there as follows:
\begin{align}\label{eq:L1_CB_C_sigma}
\| \widehat p^{\boldsymbol\sigma}_{t}(\boldsymbol{\sigma}^{-1}(\cdot)|c) - p'(\cdot|c) \|_1 \leq \beta_{n_t(c)}\big(\tfrac{\delta}{C}\big)\,, \quad \forall c\in \cC,
\end{align}
and further define: $\textsf{CB}_{t,\delta} (\cC,\boldsymbol\sigma):= \big\{p'\!: \text{(\ref{eq:L1_CB_C_sigma}) holds}\big\}$, where $(\cC,\boldsymbol\sigma)$ stresses that $\cC$ and $\boldsymbol\sigma$ are provided as input. Then, for all time $t$ and class $c\in \cC$, by construction, the true transition $p$ belongs to $\textsf{CB}_{t,\delta} (\cC,\boldsymbol\sigma)$, with probability greater than $1-\delta$.




\vspace{-1mm}
\begin{remark}
	It is crucial to remark that the above confidence set does not use elements of $\cC$  as ``meta states'' (i.e., replacing the states with classes), as considered for instance in the literature on state-aggregation. Rather, the classes are only used to \emph{group} observations from different sources and build more refined estimates for each pair: The plausible MDPs are built using the same state-space $\cS$ and action-space $\cA$, unlike in, e.g., \citep{ortner2013adaptive}.
\end{remark}
\vspace{-3mm}

\subsection{Case 2: Known Classes, Unknown Profiles}
Now we consider a more realistic setting when the oracle provides $\cC$ to the learner, but $\boldsymbol\sigma$ is unknown. In this more challenging situation, we need to \emph{estimate} profiles as well. Given time $t$, we find an \emph{empirical profile mapping} (or for short, empirical profile) $\sigma_{s,a,t}$ satisfying
\als{
\widehat p_{t}(\sigma_{s,a,t}(1) |s,a) \geq \widehat p_t(\sigma_{s,a,t}(2) |s,a) \geq \dots \geq \widehat p_t(\sigma_{s,a,t}(S) |s,a)\, ,
}
and define $\boldsymbol\sigma_t=(\sigma_{s,a,t})_{s,a}$. We then build the modified empirical estimate in a similar fashion to (\ref{eq:p_hat_C_sigma}): For any $c\in \cC$,
\als{
\widehat p^{\boldsymbol\sigma_t}_t ( x |c)  &=\frac{1}{n_t(c)}\sum_{(s,a) \in c}  N_t(s,a) \widehat p_t(\sigma_{s,a,t}(x)|s,a)\,,\quad \forall x\in \cS.
}
Then, we may modify the $L_1$ inequality in (\ref{eq:L1_CB}) as follows:
\begin{align}\label{eq:L1_CB_C}
\| \widehat p^{\boldsymbol\sigma_t}_t(\boldsymbol{\sigma}_t^{-1}(\cdot)|c) - p'(\cdot|c) \|_1 \leq \frac{1}{n_t(c)}\sum_{(s,a)\in c} N_t(s,a)\beta_{N_t(s,a)}\big(\tfrac{\delta}{C}\big)\, , \quad \forall c\in \cC,
\end{align}
which further yields the following modified confidence set that uses only $\cC$ as input:
$
\textsf{CB}_{t,\delta} (\cC):= \big\{p'\!: \text{(\ref{eq:L1_CB_C}) holds}\big\}$.
The above construction is justified by the following non-expansive property of the ordering operator, as it ensures that Weissman's concentration inequality also applies to the ordered empirical distribution:


\begin{lemma}[Non-expansive ordering]\label{lem:ordered}
	Let $p$ and $q$ be two discrete distributions, defined on the same alphabet $\cS$, with respective profile mappings $\sigma_p$ and $\sigma_q$. Then,
	\beqan
	\|p(\sigma_p(\cdot))- q(\sigma_q(\cdot))\|_1 \leq
	\| p- q\|_1\,.
	\eeqan
\end{lemma}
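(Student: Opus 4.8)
The plan is to reduce the claim to a statement about the decreasing rearrangements of $p$ and $q$, and then exploit a layer-cake (counting-function) representation of the $L_1$ norm that makes the effect of the ordering disappear. First I observe that $p(\sigma_p(\cdot))$ and $q(\sigma_q(\cdot))$ are nothing but $p$ and $q$ sorted in non-increasing order; call them $p^\downarrow$ and $q^\downarrow$ (these are uniquely determined even when $p$ or $q$ has ties, so the possible non-uniqueness of $\sigma_p,\sigma_q$ is irrelevant). Thus the left-hand side is $\sum_{k=1}^S |p^\downarrow_k - q^\downarrow_k|$, the cost of aligning the two sorted sequences position by position, whereas $\|p-q\|_1$ is the cost of the ``identity'' alignment $x\mapsto x$. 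Since $p,q$ are distributions, their entries are nonnegative, so I would use the identity $u=\int_0^\infty \mathbf 1\{u>t\}\,dt$, which gives $|u-v|=\int_0^\infty |\mathbf 1\{u>t\}-\mathbf 1\{v>t\}|\,dt$, and define the counting functions $N_p(t):=|\{x\in\cS: p(x)>t\}|$ and $N_q(t)$ analogously; crucially these depend only on the multiset of values and so are invariant under reordering.

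The first key step is that on the sorted side the representation is \emph{exact}. Because $p^\downarrow$ is non-increasing, for each $t\ge 0$ the indicator vector $(\mathbf 1\{p^\downarrow_k>t\})_k$ equals $(1,\dots,1,0,\dots,0)$ with exactly $N_p(t)$ leading ones, and likewise for $q^\downarrow$ with $N_q(t)$ ones. Hence $\sum_k |\mathbf 1\{p^\downarrow_k>t\}-\mathbf 1\{q^\downarrow_k>t\}| = |N_p(t)-N_q(t)|$, and integrating over $t$ yields
\[
\|p^\downarrow - q^\downarrow\|_1 = \int_0^\infty |N_p(t)-N_q(t)|\,dt .
\]

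The second key step is that on the unsorted side the same quantity only gives a lower bound. Swapping the finite sum and the integral and applying the triangle inequality inside the integrand gives
\[
\|p-q\|_1 = \int_0^\infty \sum_{x\in\cS} |\mathbf 1\{p(x)>t\}-\mathbf 1\{q(x)>t\}|\,dt \ \ge\ \int_0^\infty \Big| \sum_{x\in\cS}\big(\mathbf 1\{p(x)>t\}-\mathbf 1\{q(x)>t\}\big)\Big|\,dt = \int_0^\infty |N_p(t)-N_q(t)|\,dt .
\]
Comparing the two displays immediately gives $\|p^\downarrow-q^\downarrow\|_1 \le \|p-q\|_1$, which is exactly the claimed non-expansiveness.

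There is no serious obstacle here: the whole content is in choosing the representation that decouples the ordering, after which the proof is a one-line comparison. The only point that deserves care is the asymmetry between the two steps — the sorted side is an equality because a monotone indicator profile carries no cancellation, while the unsorted side is merely an inequality because the triangle inequality collapses per-coordinate sign cancellations. As an alternative I could argue combinatorially: read $\|p-q\|_1$ as an assignment cost and show that the monotone (sorted-to-sorted) matching is optimal, using the Monge inequality $|u_1-v_1|+|u_2-v_2|\le |u_1-v_2|+|u_2-v_1|$ for $u_1\le u_2,\,v_1\le v_2$ (which follows since $t\mapsto |u_2-t|-|u_1-t|$ is non-increasing), so that every inversion can be uncrossed without increasing the cost. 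This recovers the same bound but requires an exchange/termination argument that the layer-cake proof sidesteps, so I would present the counting-function proof as the main route.
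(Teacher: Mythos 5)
Your proof is correct, and it takes a genuinely different route from the paper's. The paper proves the lemma by an exchange argument: it decomposes the discrepancy between $\sigma_p$ and $\sigma_q$ into elementary switches, checks by case analysis (two monotone configurations giving equality, plus three ``intermediate'' ones giving inequality) that undoing a single switch never increases the two-term $L_1$ cost, and then iterates over all switches --- this is precisely the Monge/uncrossing argument you sketch as your alternative, so the route you deliberately set aside is essentially the authors' proof. Your main argument instead linearizes the problem via the layer-cake identity $|u-v|=\int_0^\infty |\mathbf{1}\{u>t\}-\mathbf{1}\{v>t\}|\,dt$ and the counting functions $N_p,N_q$: on the sorted side the indicator profiles are prefixes of ones, yielding the exact identity $\|p^\downarrow-q^\downarrow\|_1=\int_0^\infty |N_p(t)-N_q(t)|\,dt$, while on the unsorted side the triangle inequality under the integral gives $\|p-q\|_1\geq \int_0^\infty |N_p(t)-N_q(t)|\,dt$; both steps are airtight, and the asymmetry (equality versus inequality) is exactly where the non-expansiveness comes from. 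What your route buys: it avoids the induction-over-transpositions bookkeeping that the paper leaves implicit (``proceeding iteratively for all switch that occurs'' is asserted rather than carried out, and the case analysis is only done explicitly for a switch between indices $1$ and $2$); it is insensitive to ties, since $p^\downarrow$ and $q^\downarrow$ are canonical even when the profile mappings are not --- a point the paper defers to a uniqueness assumption in a remark; and it applies verbatim to arbitrary nonnegative vectors, not just probability distributions. What the paper's route buys is elementarity: it stays at the level of pairwise comparisons with no integral representation, at the price of a case analysis and an unfinished termination argument. Either proof suffices for the corollary that Weissman's concentration transfers to the ordered empirical distributions.
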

\vspace{-2mm}

The proof of Lemma \ref{lem:ordered} is provided in Appendix \ref{app:ordered_proof}. An immediate corollary follows.
\vspace{-1mm}

\begin{corollary}
The confidence set $\textsf{CB}_{t,\delta} (\cC)$ contains the true transition function $p$ with probability at least $1-\delta$, uniformly over all time $t$.
\end{corollary}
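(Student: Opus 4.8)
The plan is to verify that the true transition kernel $p$ itself satisfies the defining inequality (\ref{eq:L1_CB_C}) for every class $c\in\cC$, on an event of probability at least $1-\delta$ that holds uniformly in $t$; by definition of $\textsf{CB}_{t,\delta}(\cC)$ this is exactly the assertion. Fix a class $c$. Since $c$ is an equivalence class for $(0,0)$-similarity, Definition \ref{def:equi_class} guarantees that the ordered distribution $x\mapsto p(\sigma_{s,a}(x)|s,a)$ is \emph{the same} for every representative $(s,a)\in c$; I denote this common ordered class distribution by $\bar p_c$, which plays the role of $p(\cdot|c)$ in (\ref{eq:L1_CB_C}). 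First I would insert $\bar p_c$ as the comparison point: because the weights $N_t(s,a)/n_t(c)$ sum to one over $(s,a)\in c$, the aggregate estimate of (\ref{eq:p_hat_C_sigma}) built with the empirical profiles $\boldsymbol\sigma_t$ obeys, by the triangle inequality,
\[
\big\|\widehat p^{\boldsymbol\sigma_t}_t(\cdot|c)-\bar p_c(\cdot)\big\|_1 \le \frac{1}{n_t(c)}\sum_{(s,a)\in c} N_t(s,a)\,\big\|\widehat p_t(\sigma_{s,a,t}(\cdot)|s,a)-p(\sigma_{s,a}(\cdot)|s,a)\big\|_1 ,
\]
where I used $\bar p_c(\cdot)=p(\sigma_{s,a}(\cdot)|s,a)$ for each $(s,a)\in c$.

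The crucial step is then to strip the two orderings from each summand. Each term compares the empirical distribution $\widehat p_t(\cdot|s,a)$ sorted by its own empirical profile $\sigma_{s,a,t}$ against the true distribution $p(\cdot|s,a)$ sorted by its true profile $\sigma_{s,a}$, so Lemma \ref{lem:ordered}, applied with $p\leftarrow\widehat p_t(\cdot|s,a)$ and $q\leftarrow p(\cdot|s,a)$, gives
\[
\big\|\widehat p_t(\sigma_{s,a,t}(\cdot)|s,a)-p(\sigma_{s,a}(\cdot)|s,a)\big\|_1 \le \big\|\widehat p_t(\cdot|s,a)-p(\cdot|s,a)\big\|_1 .
\]
What makes this legitimate — and what I would emphasize — is that Lemma \ref{lem:ordered} holds for \emph{arbitrary} profile mappings, hence in particular for the data-dependent $\sigma_{s,a,t}$: the non-expansiveness is a deterministic, sample-pathwise statement, so the randomness of the empirical orderings never enters the probabilistic analysis. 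All remaining randomness lives in the profile-free deviations $\|\widehat p_t(\cdot|s,a)-p(\cdot|s,a)\|_1$, which are precisely the quantities governed by the base confidence set (\ref{eq:L1_CB}).

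To finish, I would control these deviations uniformly in $t$. By Weissman's inequality combined with the Laplace method — exactly the construction underlying $\beta_n$ in (\ref{eq:beta_n}) — for each fixed pair the event that $\|\widehat p_t(\cdot|s,a)-p(\cdot|s,a)\|_1\le \beta_{N_t(s,a)}(\delta/C)$ for all $t$ holds with high probability, the Laplace method being what handles the random counts $N_t(s,a)$ without any further union over time. Substituting this bound into the two displays reproduces the right-hand side of (\ref{eq:L1_CB_C}) for $c$, so on the intersection of these events $p\in\textsf{CB}_{t,\delta}(\cC)$ for all $t$, and a union bound delivers the coverage.

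The main obstacle is conceptual rather than computational: a priori the random orderings $\sigma_{s,a,t}$ threaten to correlate with the very samples used to build $\widehat p_t$, which would corrupt any direct concentration argument on the ordered/aggregated estimate. The resolution, which is the whole point of introducing Lemma \ref{lem:ordered}, is that non-expansiveness decouples the ordering from the concentration entirely, reducing everything to the standard profile-free deviations. The remaining care-point is the confidence bookkeeping: one must check that the per-pair budgets at level $\delta/C$ aggregate — through the $C$ classes, mirroring the class-level union of Case~1 rather than the $SA$-fold union of (\ref{eq:L1_CB}) — to a total failure probability of at most $\delta$.
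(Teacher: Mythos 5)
Your argument is, in substance, the paper's own: the paper treats this corollary as immediate from Lemma \ref{lem:ordered}, and the chain you write out --- triangle inequality over the convex weights $N_t(s,a)/n_t(c)$, then Lemma \ref{lem:ordered} applied per pair to compare the empirically-ordered empirical distribution with the truly-ordered true distribution, then Weissman--Laplace control of the profile-free deviations $\|\widehat p_t(\cdot|s,a)-p(\cdot|s,a)\|_1$ --- is exactly the computation the authors spell out when bounding the term $A_1$ in the proof of Lemma \ref{lem:clustering_concentration} (see (\ref{eq:A_1})). Your emphasis that non-expansiveness is a deterministic, pathwise statement, so the data-dependence of $\sigma_{s,a,t}$ never contaminates the concentration argument, is also the correct reading of why the lemma is introduced.

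The one genuine gap is the final bookkeeping step, which you flag as a ``care-point'' and then assert rather than prove: per-pair budgets at level $\delta/C$ do \emph{not} aggregate to a total failure probability of $\delta$. Coverage for a class $c$ requires the per-pair event $\|\widehat p_t(\cdot|s,a)-p(\cdot|s,a)\|_1\le\beta_{N_t(s,a)}(\tfrac{\delta}{C})$ to hold for \emph{every} $(s,a)\in c$, so the union bound costs $|c|\,\delta/C$ per class and $\sum_{c\in\cC}|c|\,\delta/C=SA\,\delta/C$ in total, which exceeds $\delta$ whenever $C<SA$ --- precisely the regime of interest. Your appeal to ``the class-level union of Case~1'' does not transfer: in Case~1 the pooled $n_t(c)$ samples, reordered by the \emph{known} true profiles, are genuinely i.i.d.\ from the common class distribution, so a single Weissman--Laplace event per class at level $\delta/C$ suffices; in Case~2 the radius in (\ref{eq:L1_CB_C}) is a weighted average of per-pair radii, and the failure probability must be paid per pair, not per class. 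The fix is to allocate $\delta/(SA)$ (or $\delta/(C|c|)$) to each pair, i.e.\ replace $\beta_{N_t(s,a)}(\tfrac{\delta}{C})$ by $\beta_{N_t(s,a)}(\tfrac{\delta}{SA})$ --- which is exactly the convention the paper itself adopts in the unknown-structure set (\ref{eq:L1_CB_C_empirical}) --- at the cost of only a logarithmic inflation of the radius. In fairness, the paper's statement of (\ref{eq:L1_CB_C}) carries the same imprecision, so you have faithfully reproduced its reasoning; but as a standalone argument your asserted union bound does not close as written.
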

\vspace{-3mm}

\section{Unknown Classes: The \ClusteringAlgo\ Algorithm}
\label{sec:clustering}
In this section, we turn to the most challenging situation when both $\cC$ and $\boldsymbol \sigma$ are unknown to the learner. To this end, we first introduce an algorithm, which we call  \ClusteringAlgo, that finds an approximate equivalence structure in the MDP by grouping transition probabilities based on statistical tests. \ClusteringAlgo\ is inspired by \citep{khaleghi2016consistent} that provides a method for clustering time series. Interestingly enough, \ClusteringAlgo\ does not require the knowledge of the number of classes in advance.

We first introduce some definitions. Given $u,v\subseteq \cS\times \cA$, we define the distance between $u$ and $v$ as
$
d(u,v):=  \|p^{\boldsymbol \sigma_u}(\cdot|u) - p^{\boldsymbol\sigma_v}(\cdot|v)\|_1
$.
\ClusteringAlgo\ relies on finding subsets of $\cS\times \cA$ that are \emph{statistically close} in terms of the distance function $d(\cdot,\cdot)$. As $d(\cdot,\cdot)$ is unknown, \ClusteringAlgo\ relies on a lower confidence bound on it:  For $u,v\subseteq \cS\times \cA$, we define the \emph{lower-confidence distance function} between $u$ and $v$ as
\als{
\widehat d(u,v):= \widehat d_{t,\delta}(u,v):= \big\|\widehat p^{\boldsymbol\sigma_{u,t}}_t(\cdot|u) - \widehat p_t^{\boldsymbol\sigma_{v,t}}(\cdot|v)\big\|_1 - \epsilon_{u,t} - \epsilon_{v,t}\, ,
}
where for $u\in \cS\times \cA$ and $t\in \Nat$, we define $\epsilon_{u,t}:=\frac{1}{n_t(u)}\sum_{\ell\in u} N_t(\ell)\bW_{N_t(\ell)}\big(\tfrac{\delta}{SA}\big)$.
We stress that, unlike $d(\cdot,\cdot)$, $\widehat d(\cdot,\cdot)$ is not a distance function.
\vspace{-1mm}




\begin{definition}[PAC Neighbor]
For a given equivalence structure $\cC$, and given $c\in \cC$, we say that $c'\in \cC$ is a \emph{PAC Neighbor} of $c$ if it satisfies:
(i) $\widehat d(c,c')\leq 0$; (ii) $\widehat d(\{j\},\{j'\})\leq 0$, for all $j\in c$ and $j'\in c'$; and (iii) $\widehat d(\{j\}, c\cup c') \leq 0$, for all $j\in c\cup c'$.
We further define $\cN(c) := \big\{c'\in\cC\setminus\{c\}\!: \text{(i)--(iii) hold} \big\}$ as the set of all PAC Neighbors of $c$.
\end{definition}
\vspace{-3mm}

\begin{definition}[PAC Nearest Neighbor]
\label{def:PAC_NN}
For a given equivalence structure $\cC$ and $c\in\cC$, we define the \emph{PAC  Nearest Neighbor} of $c$ (when it exists) as:
\als{
	\emph{\textsf{Near}}(c,\cC) \in \argmin_{u\in \cN(c)} \widehat d(c,u)\, .
}
\end{definition}
\vspace{-3mm}


\ClusteringAlgo\ proceeds as follows. At time $t$, it receives as input a parameter $\alpha>1$ that controls the level of aggregation, as well as $N_t(s,a)$ for all pairs $(s,a)$.
Starting from the trivial partition of $\{1,\ldots,SA\}$ into $\cC^0:=\big\{\{1\},\dots,\{SA\}\big\}$, the algorithm builds a coarser partition by iteratively merging elements of $\cC^0$ that are \emph{statistically close}. More precisely, the algorithm sorts elements of $\cC^0$ in a non-increasing order of $n_t(c),c\in \cC^0$ so as to promote pairs with the tightest confidence intervals. Then, starting from $c$ with the largest $n_t(c)$, it finds  the PAC Nearest Neighbor  $c'$ of $c$, that is $c'=\textsf{Near}(c,\cC^0)$. If $\frac{1}{\alpha}\leq \frac{n_t(c)/L(c)}{n_t(c')/L(c')} \leq \alpha$, where $L(c)=|c|$, the algorithm merges $c$ and $c'$, thus leading to a novel partition $\cC^1$, which contains the new cluster $c\cup c'$, and removes $c$ and $c'$. The algorithm continues this procedure with the next set in $\cC^0$, until exhaustion, thus finishing the creation of the novel partition $\cC^1$ of $\{1,\ldots,SA\}$.
\ClusteringAlgo\ continues this process, by ordering the elements of $\cC^1$ in a non-increasing order, and carrying out similar steps as before, yielding the new partition $\cC^2$. \ClusteringAlgo\ continues the same procedure until  iteration $k$ when $\cC^{k+1} = \cC^k$ (convergence). The pseudo-code of \ClusteringAlgo\ is shown in Algorithm \ref{alg:ConfidentClustering_group}.


\begin{algorithm}[H]
		\begin{algorithmic}[-1]
			\footnotesize
            \REQUIRE $N_t$, $\alpha$
			\STATE \textbf{Initialization:}
			$\cC^0\leftarrow \{\{1\},\{2\},\ldots,\{SA\}\}$; 
\; $n \leftarrow N_t$; \; $L \leftarrow \mathbf 1_{SA}$
			\STATE $\texttt{changed} \leftarrow \textsf{True}$; \; $k\leftarrow 1$;
			\WHILE{\texttt{changed}}
             \STATE $\cC^{k+1} \leftarrow \cC^k$;
			 \STATE $\texttt{changed} \leftarrow \textsf{False}$;
			 \STATE $\texttt{Index} \leftarrow \mathrm{argsort}(n)$;
    	    	\FORALL{$i\in \texttt{Index}$}
                    \IF{$n(i)= 0$}
                        \STATE Break;
                    \ENDIF
                    \IF{$\textsf{Near}(i, \cC^{k-1})\neq \emptyset$}
                            \STATE $j\leftarrow \textsf{Near}(i, \cC^{k-1})$;
                            \IF{$\frac{1}{\alpha}\leq \frac{n(i)/L(i)}{n(j)/L(j)} \leq \alpha$}
                                \STATE $\widehat p_t^{\boldsymbol\sigma_{j,t}}(\cdot|j) \leftarrow \frac{1}{n(j) + n(i)}\Big(n(j)\widehat p_t^{\boldsymbol\sigma_{j,t}}(\cdot|j)  + n(i)\widehat p_t^{\boldsymbol\sigma_{i,t}}(\cdot|i) \Big)$
                                \STATE $L(i) \leftarrow L(j) + L(i)$; \,\,  $n(i) \leftarrow n(j) + n(i)$;
                                \STATE $n(j) \leftarrow 0$, \,\, $L(j) \leftarrow 0$;
                                \STATE $\cC^{k+1} \leftarrow \cC^{k+1}\setminus (\{i\},\{j\}) \cup \{i,j\} $;
            			        \STATE $\texttt{changed} \leftarrow \textsf{True}$;
                           \ENDIF
                    \ENDIF
    			\ENDFOR
              \STATE $k\leftarrow k+1$;
			\ENDWHILE
        \OUTPUT $\cC^k$
		\end{algorithmic}
	\caption{\ClusteringAlgo}
	\label{alg:ConfidentClustering_group}
\end{algorithm}
\normalsize

The purpose of condition $\frac{1}{\alpha}\leq \frac{n_t(c)/L(c)}{n_t(c')/L(c')} \leq \alpha$ is to ensure the stability of the algorithm. It prevents merging pairs whose numbers of samples differ a lot. We note that a very similar condition (with $\alpha =2$) is considered in \citep{ortner2013adaptive} for state-aggregation. Nonetheless, we believe such a condition could be relaxed.
\vspace{-3mm}

\begin{remark}
Since at each iteration, either two or more subsets are merged, \ClusteringAlgo\ converges after, at most, $SA - 1$ steps.
\end{remark}
\vspace{-3mm}

We provide a theoretical guarantee for the correctness of \ClusteringAlgo\ for the case when $\alpha$ tends to infinity. The result relies on the following separability assumption:
\begin{assumption}[Separability]\label{ass:separation} There exists some $\Delta>0$  such that
\begin{align*}
\forall c\neq c'\in\cC,\, & \forall \ell\in c, \forall\ell'\in c',\quad d(\{\ell\},\{\ell'\})\geq \Delta\, .
\end{align*}
\end{assumption}


\begin{proposition}\label{prop:clustering_guarantee}
Under Assumption~\ref{ass:separation}, provided that $\min_{s,a}N_t(s,a)\!>\! f^{-1}(\Delta)$,
where $f: n\!\mapsto\! 4\bW_n(\tfrac{\delta}{SA})$, \ClusteringAlgo\ with the choice $\alpha\to\infty$ outputs the correct equivalence structure $\cC$ of state-action pairs with probability at least $1-\delta$.
\end{proposition}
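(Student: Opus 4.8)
The plan is to run the whole argument on the high-probability event $\mathcal{E}$ on which the true kernel lies in every per-pair confidence set, i.e. $\|\widehat p_t(\cdot|\ell) - p(\cdot|\ell)\|_1 \le \bW_{N_t(\ell)}(\tfrac{\delta}{SA})$ for all pairs $\ell$ and all $t$; by the Laplace/Weissman bounds behind \eqref{eq:beta_n} together with a union bound over the $SA$ pairs (each at level $\tfrac{\delta}{SA}$), one has $\mathbb{P}(\mathcal{E}) \ge 1-\delta$, so it suffices to show that on $\mathcal{E}$ the algorithm deterministically returns $\cC$. The basic tool is a sandwich for the lower-confidence distance on singletons. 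Writing $\epsilon_{\ell,t} = \bW_{N_t(\ell)}(\tfrac{\delta}{SA})$ and using Lemma~\ref{lem:ordered} to transfer concentration to the \emph{ordered} empirical distributions, the triangle inequality gives $\big|\,\|\widehat p_t^{\sigma_{\ell,t}}(\cdot|\ell) - \widehat p_t^{\sigma_{\ell',t}}(\cdot|\ell')\|_1 - d(\{\ell\},\{\ell'\})\big| \le \epsilon_{\ell,t}+\epsilon_{\ell',t}$, whence
\[
d(\{\ell\},\{\ell'\}) - 2(\epsilon_{\ell,t}+\epsilon_{\ell',t}) \le \widehat d(\{\ell\},\{\ell'\}) \le d(\{\ell\},\{\ell'\}).
\]

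\textbf{Soundness (no cross-class merge).} If $\ell\in c$ and $\ell'\in c'$ with $c\neq c'$ in $\cC$, Assumption~\ref{ass:separation} gives $d(\{\ell\},\{\ell'\})\ge\Delta$. Since $\bW_n$ is decreasing in $n$, we have $\epsilon_{\ell,t}+\epsilon_{\ell',t}\le 2\bW_{\min_{s,a}N_t(s,a)}(\tfrac{\delta}{SA})$, so the hypothesis $\min_{s,a}N_t(s,a) > f^{-1}(\Delta)$ together with monotonicity of the decreasing map $f$ yields $2(\epsilon_{\ell,t}+\epsilon_{\ell',t}) \le f(\min_{s,a}N_t(s,a)) < \Delta$. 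The lower bound above then forces $\widehat d(\{\ell\},\{\ell'\}) > 0$. Hence condition (ii) in the PAC-Neighbor definition fails for any two clusters lying in distinct true classes, so such a merge is never permitted. Since $\cC^0$ is pure, an induction over the merges shows that every cluster produced by \ClusteringAlgo\ is contained in a single class of $\cC$.

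\textbf{Completeness (every class is fully merged).} Fix a class $C_0\in\cC$ and clusters $c,c'\subseteq C_0$. All pairs in $C_0$ share the same ordered law $\bar p$, so the weighted-average definition of $\widehat p_t^{\boldsymbol\sigma}(\cdot|c)$, convexity of $\|\cdot\|_1$, Lemma~\ref{lem:ordered} and concentration give $\|\widehat p_t^{\boldsymbol\sigma}(\cdot|c)-\bar p\|_1 \le \epsilon_{c,t}$, and likewise for $c'$ and for $c\cup c'$. By the triangle inequality this yields $\widehat d(c,c')\le 0$ (condition (i)) and $\widehat d(\{j\},c\cup c')\le 0$ for all $j\in c\cup c'$ (condition (iii)), while condition (ii) follows from $d(\{j\},\{j'\})=0$ and the sandwich; thus $c'\in\cN(c)$. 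Now suppose a full pass triggers no change yet some class is still split: the above exhibits a nonempty $\cN(c)$, so $\textsf{Near}(c,\cdot)$ exists, and by soundness $\cN(c)$ contains only same-class clusters, so it lies in $C_0$; with $\alpha\to\infty$ the stability test $\tfrac1\alpha \le \tfrac{n(i)/L(i)}{n(j)/L(j)} \le \alpha$ is vacuous, forcing a merge and contradicting convergence. Hence at convergence every class of $\cC$ is exactly one cluster, and combined with purity the output equals $\cC$.

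I expect the completeness step to be the main obstacle: one must verify that the three PAC-Neighbor conditions — which mix singleton tests (ii) with aggregate-cluster tests (i), (iii) — all hold simultaneously inside a class, which needs the aggregate-concentration estimate $\|\widehat p_t^{\boldsymbol\sigma}(\cdot|c)-\bar p\|_1\le \epsilon_{c,t}$ and the non-obvious point that $\cN(c)$ never harbours a spurious cross-class candidate; this is precisely where soundness must be invoked to close the fixed-point argument. By contrast, the soundness bound itself is an essentially direct consequence of separability and the sample-size threshold $f^{-1}(\Delta)$.
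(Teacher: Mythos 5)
Your proposal is correct and follows essentially the same route as the paper's proof: the same event $\cE$ with the per-pair Weissman--Laplace bounds, Lemma \ref{lem:ordered} to transfer concentration to ordered empirical distributions, the aggregate bound $\|\widehat p^{\boldsymbol\sigma_{c,t}}_t(\cdot|c)-\bar p\|_1\leq \epsilon_{c,t}$ (the paper's inequality (\ref{eq:A_1})), the separability-plus-threshold argument ruling out cross-class merges, and the existence of same-class PAC Neighbors (the paper's Lemma \ref{lem:clustering_concentration}) to force merges until each class is a single cluster. Your phrasing of soundness via the failure of condition (ii) is just the contrapositive of the paper's ``merged implies $d<\Delta$ hence $d=0$'' step, and your convergence/fixed-point argument makes explicit what the paper leaves implicit, but the underlying argument is the same.
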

\vspace{-3mm}

The proof of Proposition \ref{prop:clustering_guarantee} is provided in Appendix \ref{app:clustering}. We note that
Assumption \ref{ass:separation} bears some similarity to the separability assumption used in \citep{brunskill2013sample}. Note further although the proposition relies on Assumption \ref{ass:separation}, we believe one may be able to derive a similar result under a weaker assumption as well. We leave this for future work.

Now we turn to defining  the aggregated confidence sets. Given $t$, let $\cC_t$ denote the equivalence structure output by the algorithm. 
We may use the following confidence set:
\begin{align}
\label{eq:L1_CB_C_empirical}
\textsf{CB}_{t,\delta}(\cC_t) :=\Big\{p':\| \widehat p^{\boldsymbol\sigma_t}_t(\boldsymbol\sigma_{\boldsymbol t}^{-1}(\cdot)|c) - p'(\cdot|c) \|_1 \!\leq \! \sum\nolimits_{(s,a)\in c} \tfrac{N_t(s,a)}{n_t(c)}\beta_{N_t(s,a)}\big(\tfrac{\delta}{SA}\big)
\,, \,\, \forall c\in \cC_t\Big\}\, .
\end{align}

\section{Application: The \CUCRL\ Algorithm}

This section is devoted to presenting some  applications of equivalence-aware confidence sets introduced in Section \ref{sec:confidence_bounds}. We present \CUCRL, a natural modification of \UCRL\ \citep{jaksch2010near}, which is capable of exploiting the equivalence structure of the MDP.
We consider variants of \CUCRL\ depending on which information is available to the learner in advance.

First, we briefly recall \UCRL.
At a high level, \UCRL\ maintains the set $\cM_{t,\delta}$ of MDPs at time $t$,\footnote{This set is described by the Weismann confidence bounds combined with the Laplace method. The original \UCRL\ algorithm in \citep{jaksch2010near} uses looser confidence bounds relying on union bounds instead of the Laplace method.} which is defined in (\ref{eq:model_set_UCRL2}). It then implements the optimistic principle by trying to compute
$\overline{\pi}_t^+ = \argmax_{\pi:\cS\to\cA} \max_{M \in \cM_{t,\delta}} g_\pi^M$, where $g_\pi^M$ denotes the gain of policy $\pi$ in MDP $M$. This is carried out approximately by the \texttt{Extended Value Iteration (EVI)} algorithm that builds a near-optimal policy $\pi^+_t$ and MDP $\widetilde M_t$ such that $g_{\pi^+_t}^{\widetilde M_t}  \geq \max_{\pi,  M\in\cM_{t,\delta} }g_\pi^M - \tfrac{1}{\sqrt{t}}$. Finally,  \UCRL\ does not recompute $\pi^+_t$ at each time step. Instead, it proceeds in internal episodes (indexed by  $k\in\Nat$), and computes $\pi_t^+$ only at the starting time $t_k$ of each episode, defined as $t_1=1$ and for all $k>1$,
$
t_k  \!=\! \min\!\Big\{ t \!>\! t_{k-1}\!: \exists s,a,  \nu_{t_{k-1}:t}(s,a)\!\geq\! N_{t_{k-1}}(s,a)^+\!\Big\},
$
where $\nu_{t_1:t_2}(s,a)$ denotes the number of observations of pair $(s,a)$ between time $t_1\!+\!1$ and $t_2$, and where for $z\in \Nat$, $z^+\!:=\!\max\{z,1\}$. We provide the pseudo-code of \UCRL\ in Appendix \ref{app:CUCRL2}.


\subsection{\CUCRL: Known Equivalence Structure}\label{sec:kUkM}
Here we assume that the learner knows  $\cC$ and $\boldsymbol\sigma$ in advance, and provide a variant of \UCRL, referred to as \CUCRL$(\cC,\boldsymbol\sigma)$, capable of exploiting the knowledge on $\cC$ and $\boldsymbol\sigma$. Given $\delta$, at time $t$, \CUCRL$(\cC,\boldsymbol\sigma)$ uses the following set of models
\als{
\cM_{t,\delta}(\cC,\boldsymbol\sigma) &= \Big\{ (\cS,\cA, p', \nu):  p'\!\in\!\textsf{Pw}(\cC)\,\,
 \text{ and } \,\,p'_{\cC}\!\in\!\textsf{CB}_{t,\delta}(\cC,\boldsymbol\sigma)
\Big\}\,,
}
where
 $\textsf{Pw}(\cC)$ denotes the state-transition functions that are piece-wise constant on $\cC$, and where
 $p'_{\cC}$ denotes the function induced by $p'\in \textsf{Pw}(\cC)$ over $\cC$ (that is $p'(\cdot|s,a)=p'_\cC(\cdot|c)$ for all $(s,a)\!\in\! c$).
Moreover, \CUCRL$(\cC,\boldsymbol\sigma)$ defines
$$
t_{k+1}  = \min\!\Big\{ t>t_{k}: \exists c\in\cC\!:\sum\nolimits_{(s,a)\in c}\nu_{t_k:t}(s,a)\geq n_{t_k}(c)^+\!\Big\}\,.
$$
We note that forcing the condition $p'\!\in\!\textsf{Pw}(\cC)$ may be computationally difficult. To ensure efficient implementation, we use the same \texttt{EVI} algorithm of \UCRL, where for $(s,a)\in c$, we replace $\widehat p_t(\cdot|s,a)$ and $\beta_{N_t(s,a)}(\tfrac{\delta}{SA})$ respectively with $\widehat p^{\boldsymbol{\sigma}}_{t}(\cdot|c)$ and $\beta_{n_t(c)}(\tfrac{\delta}{C})$. The precise modified steps of  \CUCRL$(\cC,\boldsymbol\sigma)$ are presented in Appendix \ref{app:CUCRL2} for completeness. An easy modification of the analysis of \citep{jaksch2010near} yields:
\begin{theorem}[Regret of \CUCRL$(\cC,\sigma)$]\label{thm:regretKnownC}
	With probability higher than $1-3\delta$,  uniformly over all time horizon $T$,
	\als{
	\kR(\mathrm{\CUCRL}(\cC,\sigma),T) \leq  18\sqrt{CT\big(S + \log(2C\sqrt{T+1}/\delta)\big)} +  DC\log_2(\tfrac{8T}{C})\,.
	}
\end{theorem}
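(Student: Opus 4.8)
The plan is to follow the analysis of \UCRL\ from \citep{jaksch2010near} essentially verbatim, tracking how each occurrence of the transition-estimation error is replaced by its equivalence-aware counterpart. First I would recall the standard regret decomposition: on the good event that the true MDP $\cM$ lies in $\cM_{t,\delta}(\cC,\boldsymbol\sigma)$ for all $t$ (which holds with probability at least $1-\delta$ by the corollary following Lemma~\ref{lem:ordered}, and where the reward term is handled separately, accounting for the remaining $2\delta$), the optimistic gain $\widetilde g_k$ computed by \texttt{EVI} at each episode $k$ satisfies $\widetilde g_k \geq g_\star - \tfrac{1}{\sqrt{t_k}}$. One then writes $\kR(T) \leq \sum_k \sum_{t=t_k}^{t_{k+1}-1}(\widetilde g_k - r_t(s_t,a_t))$ and, via the Bellman/Poisson equation for the optimistic MDP $\widetilde M_k$, expands the per-step gap into a telescoping term (bounded using the diameter $D$), a martingale term, and the crucial term $\sum_t (\widetilde p_t(\cdot|s_t,a_t) - \widehat p^{\boldsymbol\sigma}_t(\cdot|c_t)) h_k$ where $c_t$ is the class of $(s_t,a_t)$ and $h_k$ is the optimistic bias, bounded in span by $D$.

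The heart of the argument is then to bound this transition term. Since both $\widetilde p_t$ and $p$ lie in $\textsf{CB}_{t,\delta}(\cC,\boldsymbol\sigma)$, the triangle inequality gives a per-step contribution of order $D\,\beta_{n_t(c_t)}\!\big(\tfrac{\delta}{C}\big)$ instead of the $D\,\beta_{N_t(s_t,a_t)}\!\big(\tfrac{\delta}{SA}\big)$ appearing in \UCRL. Summing over time and using the episode-stopping rule (which now doubles $n_t(c)$ rather than $N_t(s,a)$), a standard pigeonhole/integral argument of the form $\sum_t n_t(c_t)^{-1/2}$ converts into a sum over classes $\sum_{c\in\cC}\sqrt{n_T(c)}$. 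By Cauchy--Schwarz, $\sum_{c\in\cC}\sqrt{n_T(c)} \leq \sqrt{C\sum_c n_T(c)} = \sqrt{CT}$, which is precisely where the factor $C$ replaces $SA$ and produces the claimed $\sqrt{C/(SA)}$ improvement. Feeding in the explicit form of $\beta_n$ from~(\ref{eq:beta_n}) yields the $\log(2C\sqrt{T+1}/\delta)$ and the $S$ inside the square root (the latter coming from the $\log(2^S-2)\approx S$ term in Weissman's bound).

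Finally I would account for the number of episodes: the new stopping rule based on $n_{t_k}(c)$ doubling implies, by the same counting argument as in \citep{jaksch2010near}, that the number of episodes is $O\big(C\log_2(\tfrac{T}{C})\big)$, which controls both the telescoping diameter term $DC\log_2(\tfrac{8T}{C})$ and the accumulated $\tfrac{1}{\sqrt{t_k}}$ optimism errors. Collecting the dominant $\sqrt{CT(S+\log(\cdot))}$ term and the lower-order $DC\log_2(\cdot)$ term gives the stated bound.

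The main obstacle I anticipate is not any single inequality but the careful verification that the \emph{aggregated} estimator $\widehat p^{\boldsymbol\sigma}_t(\cdot|c)$ interacts correctly with \texttt{EVI} and the bias function $h_k$: one must check that replacing per-pair quantities by per-class quantities is legitimate inside the extended value iteration (i.e.\ that the optimistic transition can indeed be chosen piecewise-constant on $\cC$ while remaining within the confidence set), and that the pigeonhole step correctly uses $n_t(c)$ — summed over the \emph{visited classes weighted by visit counts} — rather than naively over pairs. Getting the bookkeeping right so that the factor emerging from Cauchy--Schwarz is exactly $\sqrt{C}$ and not $\sqrt{SA}$ is the delicate point, though conceptually it is a direct consequence of the aggregation and the class-based stopping rule.
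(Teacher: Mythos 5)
Your proposal follows essentially the same route as the paper's proof: a \UCRL-style episode decomposition with per-class confidence widths $\beta_{n_k(c)}(\tfrac{\delta}{C})$, the class-based doubling rule giving $m(T)\leq C\log_2(\tfrac{8T}{C})$, and the pigeonhole (Lemma~19 of \citealp{jaksch2010near}) plus Jensen step turning $\sum_k\sum_{c\in\cC}\nu_k(c)/\sqrt{n_k(c)}$ into $(\sqrt{2}+1)\sqrt{CT}$, with only cosmetic differences (the paper first applies a time-uniform Azuma--Hoeffding inequality to replace $\sum_t r_t$ by $\sum_{s,a}N(s,a)\mu(s,a)$ and splits the transition term against the \emph{true} kernel as $\nu_k(\widetilde{\mathbf{P}}_k-\mathbf{P}_k)w_k + \nu_k(\mathbf{P}_k-\mathbf{I})w_k$, which is equivalent to your triangle-inequality step, and the time-uniform bounds make bad episodes contribute zero regret uniformly over $T$). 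The EVI obstacle you flag is resolved exactly as you suggest: the algorithm runs standard \texttt{EVI} with $\widehat p^{\boldsymbol\sigma}_t(\cdot|c)$ and $\beta_{n_t(c)}(\tfrac{\delta}{C})$ substituted for every pair in $c$, which keeps the optimistic transition piecewise-constant on $\cC$ within the aggregated confidence set.
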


The proof of Theorem \ref{thm:regretKnownC} is provided in Appendix \ref{app:regretbound}. This theorem shows that efficiently exploiting the knowledge of $\cC$ and $\boldsymbol\sigma$ yields an improvement over the regret bound of \UCRL\ by a factor of $\sqrt{SA/C}$, which could be a huge improvement when $C\ll SA$. This is the case in, for instance, many grid-world environments thanks to Definitions \ref{def:sim}--\ref{def:equi_class}; see Appendix \ref{app:examples}. 

\subsection{\CUCRL: Unknown Equivalence Structure}
Now we consider the case where  $\cC$ is unknown to the learner.
In order to accommodate this situation, we use \ClusteringAlgo\ in order to estimate the equivalence structure.

We introduce \CUCRL, which proceeds similarly to \CUCRL$(\cC,\boldsymbol\sigma)$. At each time $t$, \ClusteringAlgo\ outputs $\cC_t$ as an estimate of the true equivalence structure $\cC$. Then, \CUCRL\ uses the following set of models taking $\cC_t$ as input:
\als{
\cM_{t,\delta}(\cC_t) = \bigg\{ (\cS,\cA, p',\nu) :
p'\!\in\!\textsf{Pw}(\cC_t) \,\,\text{ and } \,\, p'_{\cC_t}\!\in\!\textsf{CB}_{t,\delta}(\cC_t)
\bigg\} \, .
}
Further, it sets the starting step of episode $k+1$ as:
\als{
t_{k+1} = \min\bigg\{t > t_{k}: \exists c\!\in\!\cC_{t_k}, \, \sum\nolimits_{(s,a)\in c}\nu_{t_{k}:t}(s,a)\!\geq\! n_{t_k}(c)^+
\text{ or }\
\exists s,a,\,  \nu_{t_{k}:t}(s,a)\!\geq\! N_{t_k}(s,a)^+\bigg\}\,.
}
Similarly to \CUCRL$(\cC,\boldsymbol\sigma)$, we use a modification of \texttt{EVI} to implement \CUCRL.

\begin{remark}\label{rem:CUCRL_rem}
Note that $\cM_{t,\delta}(\cC)\neq \cM_{t,\delta}(\cC_t)$ as we may have $\cC_t\neq \cC$. Nonetheless, the design of \ClusteringAlgo, which relies on confidence bounds, ensures that $\cC_t$ is informative enough, in the sense that $\cM_{t,\delta}(\cC_t)$ could be much smaller (hence, better) than a set of models that one would obtain by ignoring equivalence structure; this is also validated by the numerical experiments in ergodic environments in the next subsection.
\end{remark}

\subsection{Numerical Experiments}\label{sec:xps}
We conduct numerical experiments to examine the performance of the proposed variants of \CUCRL, and compare it to that of \UCRLLaplace\footnote{\UCRLLaplace\ is a variant of \UCRL, which uses confidence bounds derived by combining Hoeffding's and Weissman's inequalities with the Laplace method, as in (\ref{eq:L1_CB}). We stress that \UCRLLaplace\ attains a smaller regret than the original \UCRL\ of  \citep{jaksch2010near}.}. In our experiments, for running \ClusteringAlgo, as a sub-routine of \CUCRL, we set $\epsilon_{u,t}=\beta_{n_t(u)}\big(\tfrac{\delta}{3SA}\big)$ for $u\in \cS\times \cA$ (in the definition of both $\widehat d(\cdot,\cdot)$ and $\textsf{CB}_{t,\delta}(\cC_t)$). Although this could lead to a biased estimation of $p$, such a bias is controlled thanks to using $\alpha>1$ (in our experiments, we set $\alpha=4$).

In the first set of experiments, we examine the regret of various algorithms in ergodic environments. Specifically, we consider the ergodic \emph{RiverSwim} MDP, shown in Figure \ref{fig:river_swim}, with 25 and 50 states. In both cases, we have $C=6$ classes. 
In Figure \ref{fig:regret_ergodic}, we plot the regret against time steps under \CUCRL$(\cC,\boldsymbol\sigma)$, \CUCRL, and \UCRLLaplace\ executed in the aforementioned environments. The results are averaged over 100 independent runs, and the 95\% confidence intervals are shown. All algorithms use $\delta = 0.05$, and for \CUCRL, we use $\alpha=4$. As the curves show, the proposed \CUCRL\ algorithms significantly outperform \UCRLLaplace, and \CUCRL$(\cC,\boldsymbol\sigma)$ attains the smallest regret. In particular, in the 25-state environment and at the final time step, \CUCRL$(\cC,\boldsymbol\sigma)$ attains a regret smaller than that of \UCRLLaplace\ by a factor of approximately $\sqrt{SA/C}=\sqrt{50/6} \approx 2.9$, thus verifying Theorem \ref{thm:regretKnownC}. Similarly, we may expect an improvement in regret by a factor of around $\sqrt{SA/C}=\sqrt{100/6} \approx 4.1$ in the other environment. We however get a better improvement (by a factor of around 8), which can be attributed to the increase in the regret of \UCRLLaplace\ due to a long burn-in phase (i.e., the phase before the algorithm starts learning).

\begin{figure}[t]
\floatconts
  {fig:regret_ergodic}
  {\caption{Regret of various algorithms in Ergodic RiverSwim environments}}
  {
    \subfigure[25-state Ergodic RiverSwim]{\label{fig:circle}%
      \includegraphics[width=0.49\linewidth]{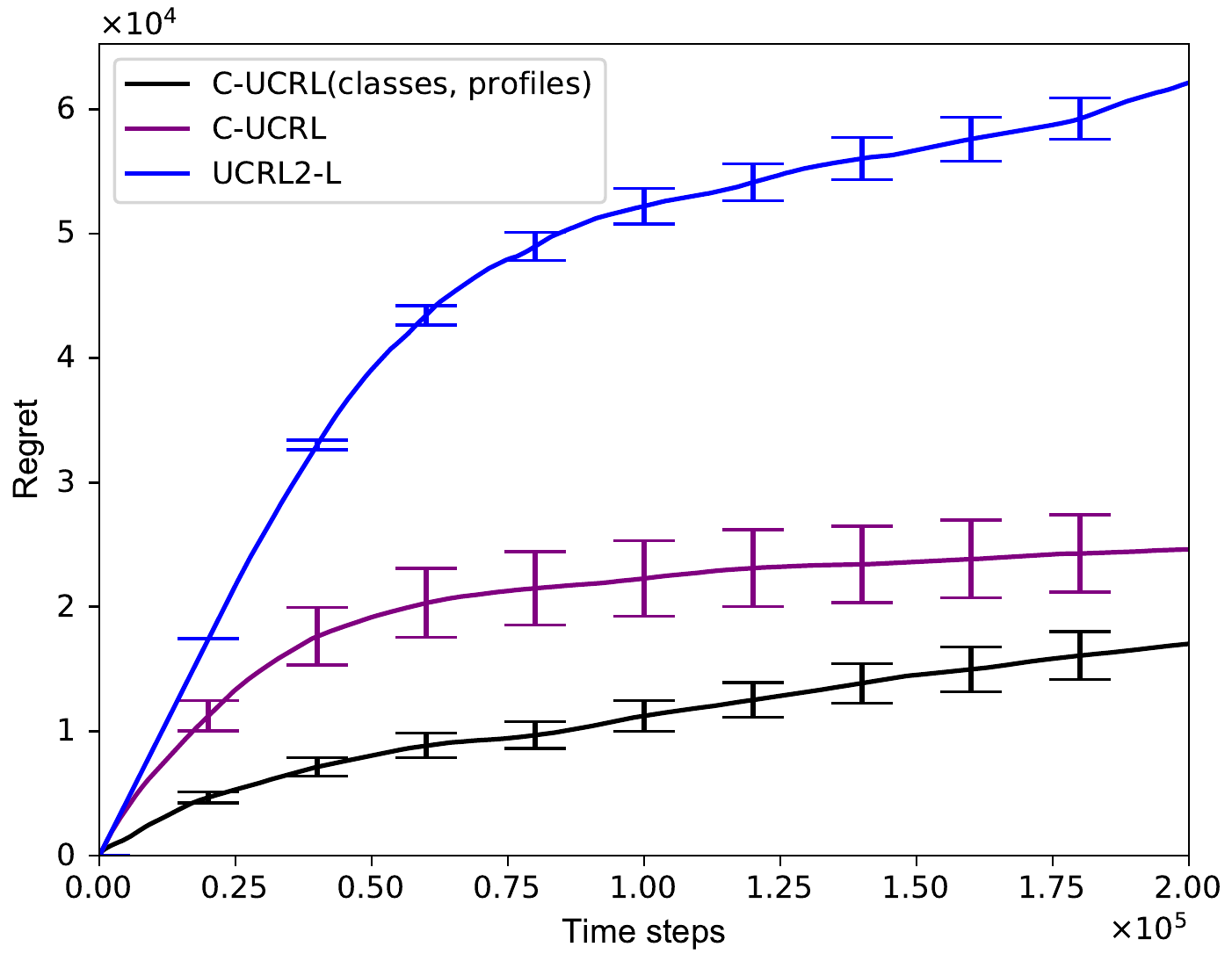}}%
    \subfigure[50-state Ergodic RiverSwim]{\label{fig:square}%
      \includegraphics[width=0.49\linewidth]{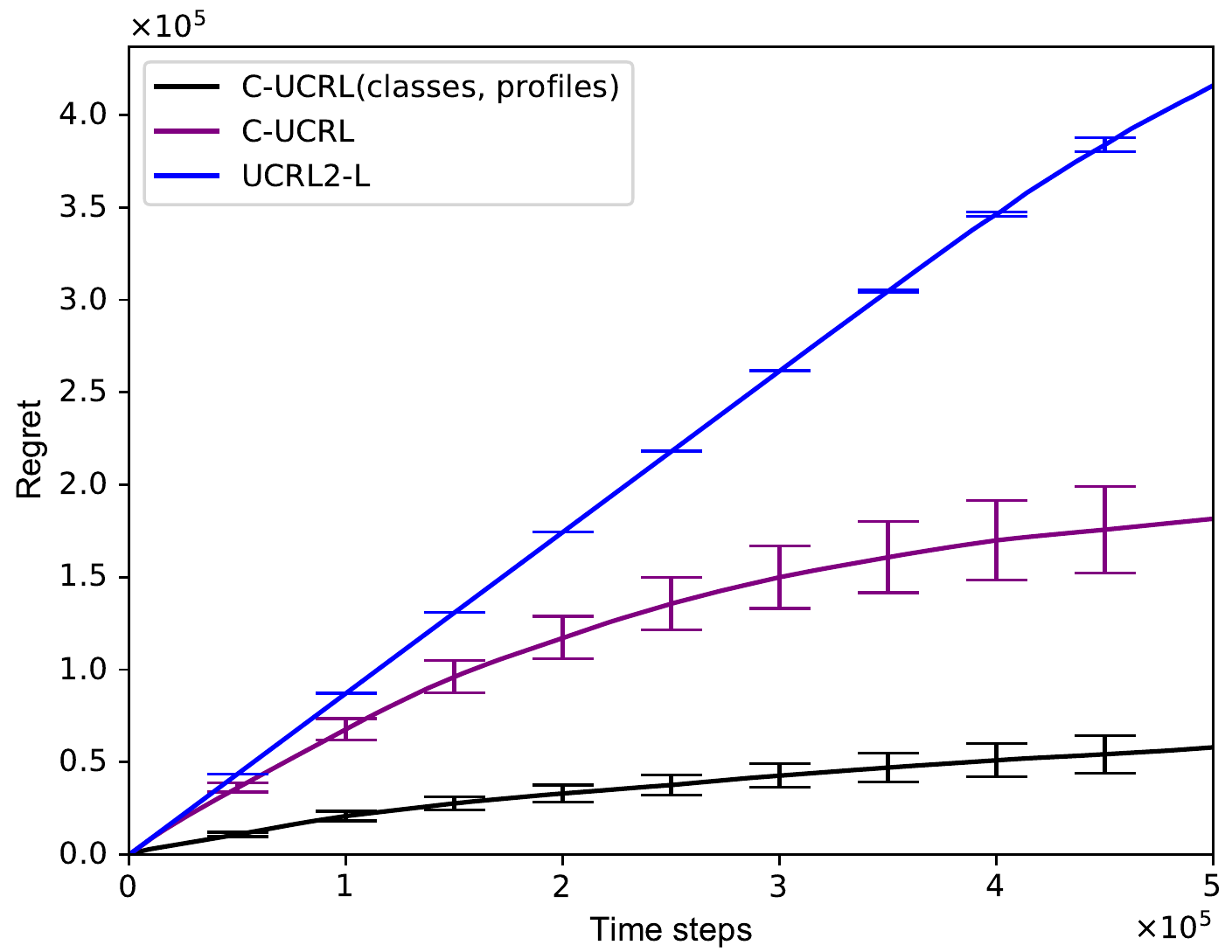}}
  }
\end{figure}

We now turn our attention to the quality of approximate equivalence structure produced by \ClusteringAlgo\ (Algorithm \ref{alg:ConfidentClustering_group}),  which is run as a sub-routine of \CUCRL. To this aim, we introduce two performance measures to assess the quality of clustering: The first one is defined as the total number of pairs that are \emph{mis-clustered}, normalized by the total number $SA$ of pairs. We refer to this measure as the \emph{mis-clustering ratio}. More precisely, let $\cC_t$ denote the empirical equivalence structure output by \ClusteringAlgo\ at time $t$. For a given $c\in \cC_t$, we consider the restriction of $\cC$ to $c$, denoted by $\cC|c$. We find $\ell(c) \in \cC|c$ that has the largest cardinality: $\ell(c)\in \argmax_{x\in \cC|c} |x|$. Now, we define
$$
\text{mis-clustering ratio at time $t$} := \frac{1}{SA}\sum_{c\in \cC_t} |c\setminus \ell(c)| \, .
$$
Note that the mis-clustering ratio falls in $[0,1]$ as $\sum_{c\in \cC_t} |c|=SA$ for all $t$.
Our second performance measure accounts for the error in the aggregated empirical transition probability due to mis-clustered pairs. We refer to this measure as \emph{mis-clustering bias}. Precisely speaking, for a given pair $e\in \cS\times \cA$, we denote by $c_{e}\in \cC_t$ the set containing $e$ in $\cC_t$. Then, we define the mis-clustering bias at time $t$ as
\als{
\text{mis-clustering bias at time $t$} := \sum_{c\in \cC_t}\sum_{e\notin \ell(c)} \|\hat{p}_t(\cdot|c_e) - \hat{p}_t (\cdot|c_e\setminus \{e\})\|_1 \, .
}
In Figure \ref{fig:clustering_error_CI}, we plot the ``mis-clustering ratio'' and ``mis-clustering bias'' for the empirical equivalence structures produced in the previous experiments. We observes on the figures that the errors in terms of the aforementioned performance measures reduce. These errors do now vanish quickly, thus indicating that the generated empirical equivalence structures do not agree with the true one. Yet, they help reduce uncertainty in the transition probabilities, and, in turn, reduce the regret; we refer to Remark \ref{rem:CUCRL_rem} for a related discussion.

\begin{figure}[t]
\floatconts
  {fig:clustering_error_CI}
  {\caption{Assessment of quality of approximate equivalence structures for Ergodic RiverSwim with 25 and 50 states}}
  {
    \subfigure[Mis-clustering bias]{\label{fig:circle}%
      \includegraphics[width=0.49\linewidth]{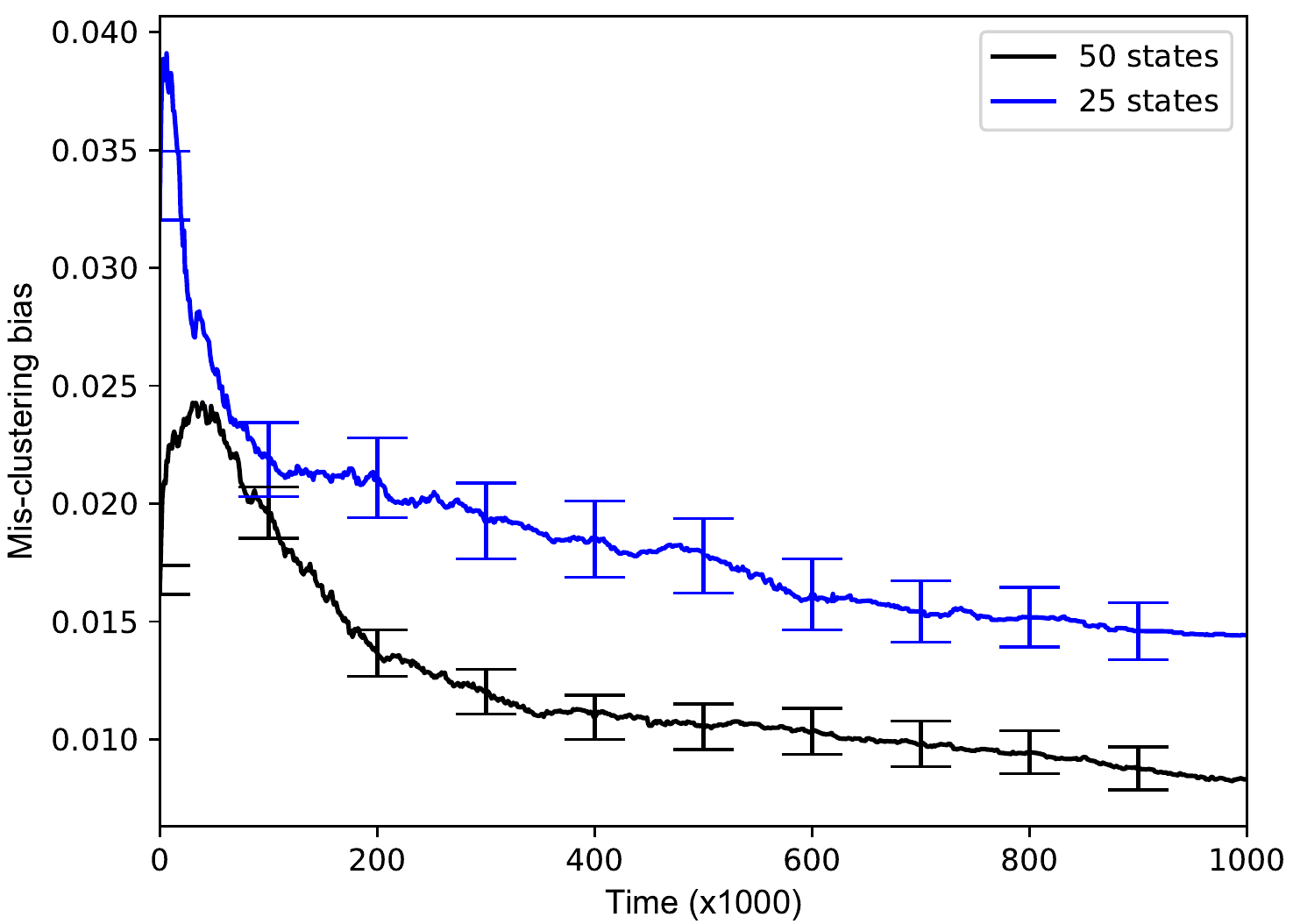}}%
    \subfigure[Mis-clustering ratio]{\label{fig:square}%
      \includegraphics[width=0.49\linewidth]{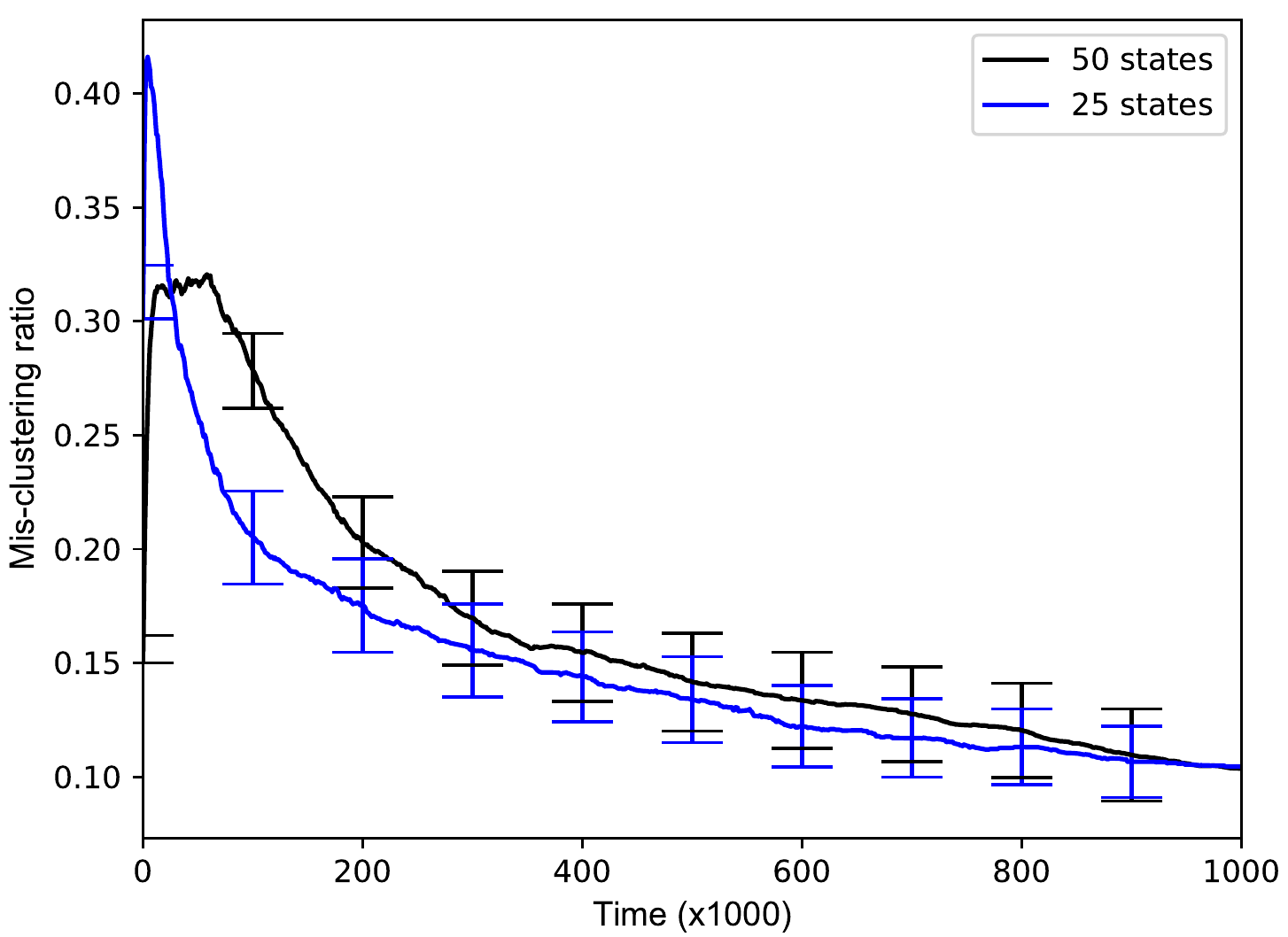}}
  }
\end{figure}


In the second set of experiments, we consider two communicating environments: \emph{4-room grid-world} (with 49 states) and \emph{RiverSwim} (with 25 states). These environments are described in Appendix \ref{app:simul}. In Figure \ref{fig:regret_communicating}, we plot the regret against time steps under \CUCRL$(\cC,\boldsymbol\sigma)$, \CUCRL, and \UCRLLaplace, and similarly to the previous case, we set $\delta = 0.05$ and $\alpha=4$. The results are averaged over 100 independent runs, and the 95\% confidence intervals are shown. In both environments, \CUCRL$(\cC,\boldsymbol\sigma)$ significantly outperforms \UCRLLaplace. However, \CUCRL\ attains a regret, which is slightly worse than that of \UCRLLaplace. This can be attributed to the fact that \ClusteringAlgo\ is unable to find an accurate enough equivalence structure in these non-ergodic environments.

\begin{figure}[htbp]
\floatconts
  {fig:regret_communicating}
  {\caption{Regret of various algorithms in communicating environments}}
  {
    \subfigure[25-state RiverSwim]{\label{fig:circle}%
      \includegraphics[width=0.49\linewidth]{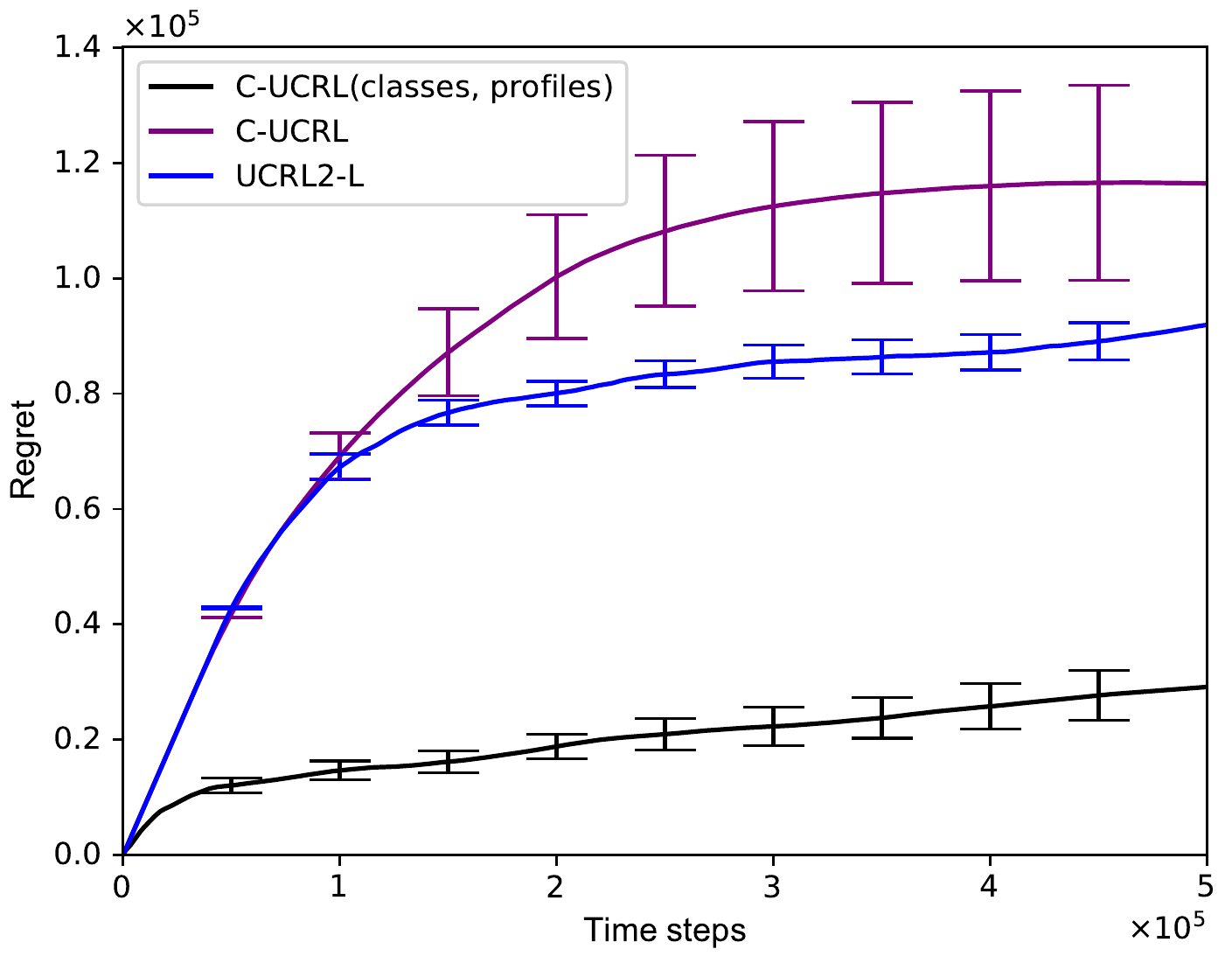}}%
    \subfigure[Grid-World]{\label{fig:square}%
      \includegraphics[width=0.49\linewidth]{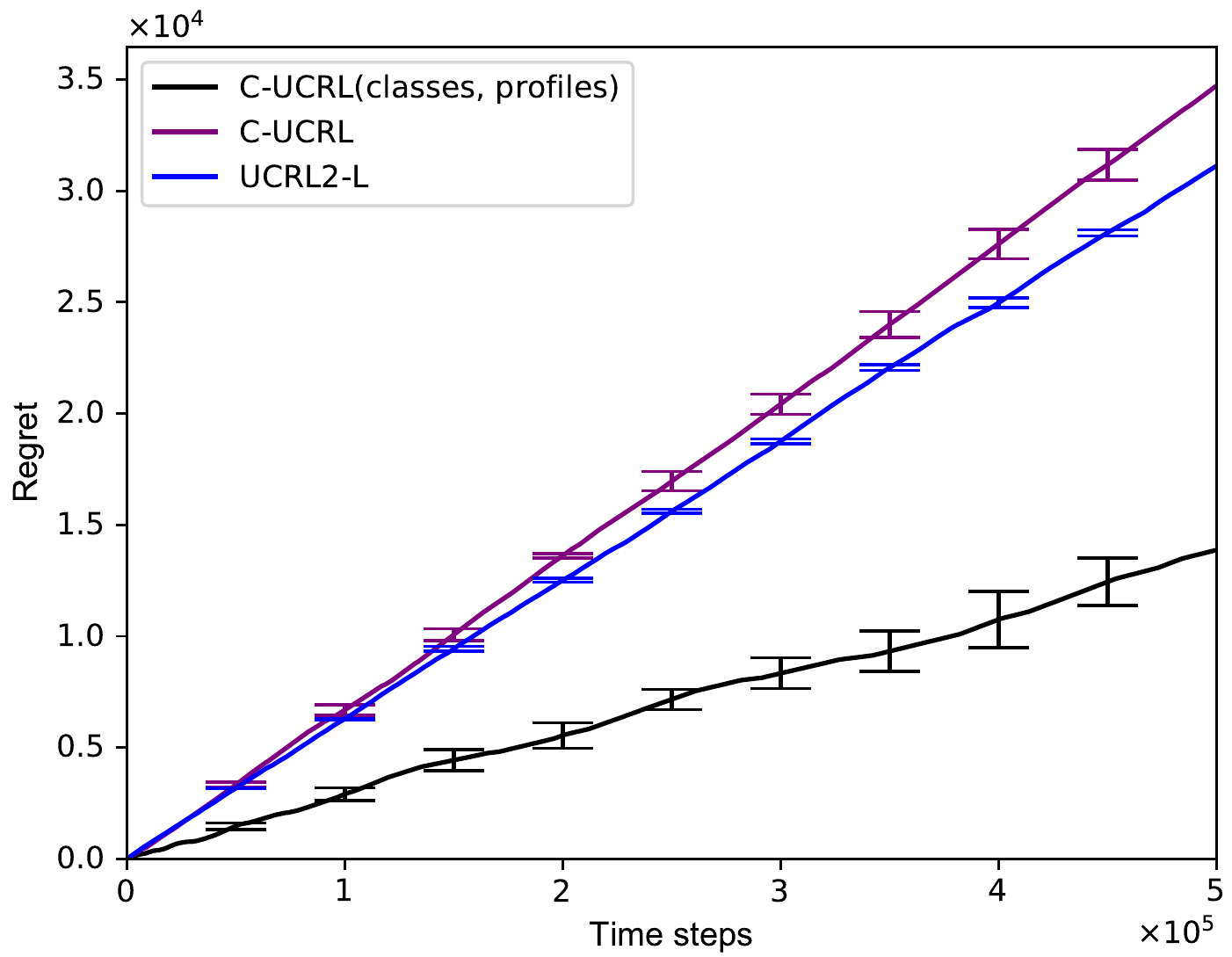}}
  }
\end{figure}

\section{Conclusion}
We introduced a similarity measure of state-action pairs, which induces a notion of equivalence of profile distributions in the state-action space of a Markov Decision Process. 
In the case of a known equivalence structure, we have presented confidence sets incorporating such knowledge that are provably tighter than their corresponding counterparts ignoring equivalence structure. In the case of an unknown equivalence structure, we presented an algorithm, based on confidence bounds, that seeks to estimate an empirical equivalence structure for the MDP. In order to illustrate the efficacy of our developments, we further preseted \CUCRL, which is a natural modification of \UCRL\ using the presented confidence sets. We show that when the equivalence structure is known to the learner, \CUCRL\ attains a regret smaller than that of \UCRL\ by a factor of $\sqrt{SA/C}$ in communicating MDPs, where $C$ denotes the number of classes. 
In the case of an unknown equivalence structure, we show through numerical experiments that in ergodic environments, \CUCRL\ outperforms \UCRL\ significantly. The regret analysis in this case is much more complicated, and we leave it for future work. We believe that the presented confidence sets can be combined with model-based algorithms for the discounted setup, which we expect to yield improved  performance in terms of sample complexity both in theory and practice.

\vspace{-2mm}
\section*{Acknowledgements}
This work has been supported by CPER Nord-Pas-de-Calais/FEDER DATA Advanced data science and technologies 2015-2020, the French Ministry of Higher Education and Research, Inria, and the French Agence Nationale de la Recherche (ANR), under grant ANR-16-CE40-0002 (project BADASS).


\appendix

\section{Pseudo-Codes of \UCRL\ and \CUCRL}\label{app:CUCRL2}
In this section, we provide the pseudo-codes of \UCRL, \CUCRL$(\cC,\boldsymbol\sigma)$, and \CUCRL. 

\begin{algorithm}[!h]
   \caption{\UCRL\ with input parameter $\delta\in (0,1]$ \citep{jaksch2010near} }
   \label{alg:ucrl}
   \footnotesize
\begin{algorithmic}
   \STATE \textbf{Initialize:} For all $(s,a)$, set $N_0(s,a)=0$ and $v_0(s,a)=0$. Set $t_0=0$, $t=1$, $k=1$, and observe the initial state $s_1$;
   \FOR{episodes $k\geq  1$}
       \STATE Set $t_k = t$;
       \STATE Set $N_{t_k}(s,a) = N_{t_{k-1}}(s,a)+ v_{k}(s,a)$ for all $(s,a)$;
       \STATE Compute empirical estimates $\widehat \mu_{t_k}(s,a)$ and $\widehat p_{t_k}(\cdot|s,a)$ for all $(s,a)$;
       \STATE Compute  $\pi^+_{t_k} = \texttt{EVI}\Big(\widehat\mu_{t_k}, \widehat p_{t_k}, N_{t_k}, \tfrac{1}{\sqrt{t_k}},\tfrac{\delta}{SA}\Big)$ --- see Algorithm \ref{alg:EVI};
       \WHILE{$v_{k}(s_t,\pi_{t_k}^+(s_t))<  \max\{1,N_{t_k}(s_t,\pi_{t_k}^+(s_t))\}$}
            \STATE Play action $a_t=\pi_{t_k}^+(s_t)$, and observe the next state $s_{t+1}$ and reward $r_t(s_t, a_t)$;
            \STATE Set $v_k(s_t,a_t)=v_k(s_t,a_t)+1$;
            \STATE Set $t=t+1$;
       \ENDWHILE
   \ENDFOR
\end{algorithmic}
\normalsize
\end{algorithm}


\begin{algorithm}[!h]
	\caption{\texttt{EVI}$(\mu,p,N,\epsilon,\delta)$ \citep{jaksch2010near}}
\footnotesize
	\label{alg:EVI}
	\begin{algorithmic}
		\STATE \textbf{Initialize:} $u^{(0)}\equiv 0, u^{(-1)}\equiv-\infty$, $n=0$;
		\WHILE{$\max_s(u^{(n)}(s)-u^{(n-1)}(s)) - \min_s(u^{(n)}(s)-u^{(n-1)}(s)) > \epsilon$}
		\STATE For all $(s,a)$, set $\mu'(s,a) = \mu(s,a) + \beta'_{N(s,a)}(\delta)$;
        \STATE For all $(s,a)$, set $p'(\cdot|s,a) \in \argmax_{q\in \cP(s,a)} \sum_{x\in \cS} q(x)u^{(n)}(x)$ where
        $$
        \cP(s,a) := \Big\{q\in \Delta^S: \|q - p(\cdot|s,a)\|_1 \leq \beta_{N(s,a)}(\delta)\Big\}\,;
        $$
        \STATE For all $s$, update $u^{(n+1)}(s) =  \max_{a\in\cA} \Big(\mu'(s,a) + \sum_{x\in \cS} p'(x|s,a)u^{(n)}(x)\Big)$;
        \STATE For all $s$, update $\pi_{n+1}(s) \in \Argmax_{a\in\cA} \Big(\mu'(s,a) + \sum_{x\in \cS} p'(x|s,a)u^{(n)}(x)\Big)$;
		\STATE Set $n=n+1$;
\ENDWHILE
\STATE \textbf{Output:} $\pi_{n+1}$
	\end{algorithmic}
\normalsize
\end{algorithm}

\begin{algorithm}[!h]
   \caption{\CUCRL$(\cC,\boldsymbol\sigma)$ with input parameter $\delta\in (0,1]$ }
   \label{alg:cucrl}
   \footnotesize
\begin{algorithmic}
   \STATE \textbf{Initialize:} For all $c\in \cC$, set $n_0(c)=0$ and $V_0(c)=0$. Set $t_0=0$, $t=1$, $k=1$, and observe the initial state $s_1$;
   \FOR{episodes $k\geq  1$}
       \STATE Set $t_k = t$;
       \STATE Set $n_{t_k}(c) = n_{t_k-1}(c) + V_{k-1}(c)$ for all $c$;
       \STATE Compute empirical estimates $\widehat \mu_{t_k}^{\boldsymbol\sigma}(c)$ and $\widehat p_{t_k}^{\boldsymbol\sigma}(\cdot|c)$ for all $c$;
       \STATE Compute  $\pi^+_{t_k} = \texttt{EVI}\Big(\widehat\mu^{\boldsymbol\sigma}_{t_k}, \widehat p^{\boldsymbol\sigma}_{t_k}, n_{t_k}, \tfrac{1}{\sqrt{t_k}},\tfrac{\delta}{C}\Big)$ --- see Algorithm \ref{alg:EVI};
       \WHILE{$V_k(c_t)<\max\{1,n_{t_k}(c_t)\}$}
            \STATE Play action $a_t=\pi_{t_k}^+(s_t)$, and observe the next state $s_{t+1}$ and reward $r_t(s_t, a_t)$;
            \STATE Set $c_t\in \cC$ to be the class containing $(s_t,a_t)$;
            \STATE Set $V_k(c_t)=V_k(c_t)+1$;
            \STATE Set $t=t+1$;
       \ENDWHILE
   \ENDFOR
\end{algorithmic}
\normalsize
\end{algorithm}

\begin{algorithm}[!h]
   \caption{\CUCRL\ with input parameter $\delta\in (0,1]$ }
   \label{alg:cucrl_unknown}
   \footnotesize
\begin{algorithmic}
   \STATE \textbf{Initialize:} For all $(s,a)$, set $N_0(s,a)=0$ and $v_0(s,a)=0$. For all $c\in \cC$, set $n_0(c)=0$ and $V_0(c)=0$. Set $t_0=0$, $t=1$, $k=1$, and observe the initial state $s_1$;
   \FOR{episodes $k\geq  1$}
       \STATE Set $t_k = t$;
       \STATE Set $N_{t_k}(s,a) = N_{t_{k-1}}(s,a)+ v_{k}(s,a)$ for all $(s,a)$;
       \STATE Set $n_{t_k}(c) = n_{t_k-1}(c) + V_{k-1}(c)$ for all $c$;
       \STATE Compute empirical estimates $\boldsymbol\sigma_{t_k}$;
       \STATE Find $\cC_{t_k}$ using \ClusteringAlgo;
       \STATE Compute empirical estimates $\widehat \mu_{t_k}^{\boldsymbol\sigma_{t_k}}(c)$ and $\widehat p_{t_k}^{\boldsymbol\sigma_{t_k}}(\cdot|c)$ for all $c\in \cC_{t_k}$;
       \STATE Compute  $\pi^+_{t_k} = \texttt{EVI}\Big(\widehat\mu^{\boldsymbol\sigma_{t_k}}_{t_k}, \widehat p^{\boldsymbol\sigma_{t_k}}_{t_k}, n_{t_k}, \tfrac{1}{\sqrt{t_k}},\tfrac{\delta}{SA}\Big)$ --- see Algorithm \ref{alg:EVI};
       \WHILE{$v_k(s_t,\pi_{t_k}^+(s_t)) < \max\{1,N_{t_k}(s_t,\pi_{t_k}^+(s_t))\}$ and $V_k(c_t)<\max\{1,n_{t_k}(c_t)\}$}
            \STATE Play action $a_t=\pi_{t_k}^+(s_t)$, and observe the next state $s_{t+1}$ and reward $r_t(s_t, a_t)$;
            \STATE Set $c_t\in \cC_{t_k}$ to be the class containing $(s_t,a_t)$;
            \STATE Set $V_k(c_t)=V_k(c_t)+1$;
            \STATE Set $v_k(s_t,a_t)=v_k(s_t,a_t)+1$;
            \STATE Set $t=t+1$;
       \ENDWHILE
   \ENDFOR
\end{algorithmic}
\normalsize
\end{algorithm}


\section{Proof of Lemma \ref{lem:ordered}}\label{app:ordered_proof}

Let us consider the case when a switch occurs between index $1$ and $2$, that is $ \sigma_q(1)=\sigma_p(2)$ and $\sigma_q(2)=\sigma_p(1)$.	In this situation, we thus have $p(\sigma_p(1))>p(\sigma_p(2))$ but
$p(\sigma_q(1))\leq p(\sigma_q(2))$. Then, we study $\sum_{i=1,2}|p(\sigma_p(i))-q(\sigma_q(i))|$. 	First, we note that if $q(\sigma_q(1))<p(\sigma_p(1))$ and $q(\sigma_q(2))<p(\sigma_p(2))$,
	then
\als{
|p(\sigma_p(1))-q(\sigma_q(1))|+|p(\sigma_p(2))-q(\sigma_q(2)))| &=  p(\sigma_p(1))-q(\sigma_q(2))+ p(\sigma_p(2))-q(\sigma_q(1))\\
&= |p(\sigma_p(1))-q(\sigma_p(1))| +|p(\sigma_p(2))-q(\sigma_p(2))|\,.
}

Likewise, the same equality occurs if $q(\sigma_q(1))>p(\sigma_p(1))$ and $q(\sigma_q(2))>p(\sigma_p(2))$. 	Now, in  the remaining intermediate cases (that is $q(\sigma_p(1))<p(\sigma_p(2))<q(\sigma_p(2))<p(\sigma_p(1))$, 	$p(\sigma_p(2))<q(\sigma_p(1))<q(\sigma_p(2))<p(\sigma_p(1))$, and $p(\sigma_p(2))<q(\sigma_p(1))<p(\sigma_p(1))<q(\sigma_p(2))$),
it is immediate to check that
\beqan
|p(\sigma_p(1))-q(\sigma_q(1))|+|p(\sigma_p(2))-q(\sigma_q(2)))| \leq |p(\sigma_p(1))-q(\sigma_p(1))| + |p(\sigma_p(2))-q(\sigma_p(2))|\,.
\eeqan
	
Thus, proceeding iteratively for all switch that occurs, and decomposing the permutations $\sigma_p$ and $\sigma_q$ into elementary switches, we deduce that  almost surely
\als{
\| q(\sigma_q(\cdot)) - p(\sigma_p(\cdot))\|_1 \leq \| q(\sigma_p(\cdot)) - p(\sigma_p(\cdot))\|_1 = \|p - q\|_1\,,
}
thus concluding the lemma.
\ep


\section{Proof of Proposition \ref{prop:clustering_guarantee}}\label{app:clustering}
First recall that
$
\Delta:= \min\big\{d(\{\ell\},\{\ell'\}): \ell,\ell'\in \cS\times \cA \hbox{ and $\ell, \ell'$ are not in the same class}\big\}\,.
$
Define
\als{
\cE=\bigcap_{t\in \Nat}\bigcap_{s,a}\Big\{\|p(\cdot|s,a) - \widehat p_t(\cdot|s,a)\|_1 \leq \beta_{N_t(s,a)}\big(\tfrac{\delta}{SA}\big) \Big\}\, .
}
Note that $\mathbb P(\cE)\geq 1-\delta$. We will need the following lemma:

\begin{lemma}
\label{lem:clustering_concentration}
Assume that the event $\cE$ holds. Then, for all $t$, at every round $k$ of \emph{\ClusteringAlgo}, for all $v\in \cC^k$, there exists $u\in\cN(v)$ such that $u$ and $v$ belong to the same class.
\end{lemma}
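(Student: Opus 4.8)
The plan is to prove Lemma~\ref{lem:clustering_concentration} by establishing two facts under the event $\cE$: first, that pairs belonging to the \emph{same} true class cannot fail the PAC Neighbor tests, and second, that pairs in \emph{different} true classes are never spuriously merged (so the partition $\cC^k$ never crosses true class boundaries). Granting both, any $v\in \cC^k$ must be a subset of a single true class $c^\star\in \cC$; if $|c^\star|\geq 2$ then any pair $\ell'\in c^\star$ not yet in $v$ lies in some $u\in\cC^k$ that is also a subset of $c^\star$, and I claim such a $u$ is a PAC Neighbor of $v$, giving the conclusion. If $c^\star$ is a singleton the statement is vacuous for $v$, but the lemma as stated asks for existence of $u\in\cN(v)$, so I would note the intended reading applies to non-singleton classes (or classes not yet fully merged).

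The key technical step is the following concentration estimate. For any $u\subseteq \cS\times\cA$, on the event $\cE$ the aggregated ordered empirical distribution satisfies
\als{
\big\|\widehat p^{\boldsymbol\sigma_{u,t}}_t(\cdot|u) - p^{\boldsymbol\sigma_u}(\cdot|u)\big\|_1 \leq \epsilon_{u,t}\,,
}
which follows by writing the aggregate as a convex combination $\sum_{\ell\in u}\tfrac{N_t(\ell)}{n_t(u)}\widehat p_t(\sigma_{\ell,t}(\cdot)|\ell)$, applying the non-expansive ordering property of Lemma~\ref{lem:ordered} termwise to bound each $\|\widehat p_t(\sigma_{\ell,t}(\cdot)|\ell)-p(\sigma_\ell(\cdot)|\ell)\|_1$ by $\beta_{N_t(\ell)}(\tfrac{\delta}{SA})$, and using the triangle inequality together with the definition $\epsilon_{u,t}=\tfrac{1}{n_t(u)}\sum_{\ell\in u}N_t(\ell)\beta_{N_t(\ell)}(\tfrac{\delta}{SA})$. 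From this, for any $u,v$ one obtains the sandwich
\als{
d(u,v) - \epsilon_{u,t}-\epsilon_{v,t} \;\leq\; \widehat d(u,v) \;\leq\; d(u,v)\,,
}
by inserting and removing the population profiles inside the $L_1$ norm and applying the triangle inequality in both directions. This is the workhorse: the right inequality shows $\widehat d\leq 0$ whenever $d=0$ (same class), and the left inequality shows that $\widehat d(u,v)>0$ whenever the true distance $d(u,v)$ exceeds $\epsilon_{u,t}+\epsilon_{v,t}$.

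With the sandwich in hand, the two facts follow. For same-class elements $u,v$ (subsets of one true class), $d(u,v)=0$ and all sub-distances in conditions (i)--(iii) vanish, so $\widehat d\leq 0$ on each, verifying $v\in\cN(u)$ and $u\in\cN(v)$; hence the required PAC Neighbor exists. For different-class elements, the separability Assumption~\ref{ass:separation} gives $d(\{\ell\},\{\ell'\})\geq \Delta$ for $\ell\in c$, $\ell'\in c'$ in distinct true classes, and the hypothesis $\min_{s,a}N_t(s,a)>f^{-1}(\Delta)$ with $f(n)=4\beta_n(\tfrac{\delta}{SA})$ forces $\epsilon_{u,t}+\epsilon_{v,t}<\Delta$ for the relevant singletons, so condition (ii) $\widehat d(\{j\},\{j'\})\leq 0$ fails and no cross-class merge passes the neighbor test. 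I expect the main obstacle to be the bookkeeping in condition (iii), where $c\cup c'$ mixes the two candidate sets: I must argue that when $c,c'$ already lie inside the \emph{same} true class the union still satisfies $\widehat d(\{j\},c\cup c')\leq 0$ (which again reduces to $d=0$ via the concentration bound applied to the merged set), whereas a cross-class union is blocked earlier by (ii). Carefully tracking that the inductive invariant ``every $\cC^k$ refines $\cC$'' is preserved across the merge step, and that the running $\widehat p$ updates in Algorithm~\ref{alg:ConfidentClustering_group} remain the genuine aggregated ordered empirical distribution of the merged set, is the part demanding the most care.
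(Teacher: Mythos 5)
Your core technical step is exactly the paper's: on $\cE$, write the aggregated ordered empirical distribution as the convex combination $\sum_{\ell\in u}\tfrac{N_t(\ell)}{n_t(u)}\widehat p_t(\sigma_{\ell,t}(\cdot)|\ell)$, apply Lemma~\ref{lem:ordered} termwise, and conclude $\big\|\widehat p^{\boldsymbol\sigma_{u,t}}_t(\cdot|u)-p^{\boldsymbol\sigma_u}(\cdot|u)\big\|_1\le\epsilon_{u,t}$, which is the paper's bound (\ref{eq:A_1}); the resulting one-sided inequality $\widehat d(u,v)\le d(u,v)$ (with the analogous statements for conditions (ii) and (iii), using $\beta_{N_t(\cdot)}$ and $\epsilon_{u\cup v,t}$) shows that any $u$ lying in the same true class as $v$ passes tests (i)--(iii), and that is the entire content of the paper's proof of this lemma. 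Where you genuinely diverge is scope: you also prove the blocking direction (cross-class candidates fail test (ii)) via the lower half of your sandwich combined with Assumption~\ref{ass:separation} and the condition $\min_{s,a}N_t(s,a)>f^{-1}(\Delta)$, and you maintain the inductive invariant that $\cC^k$ refines $\cC$. The paper defers all of that to the proof of Proposition~\ref{prop:clustering_guarantee}; the lemma itself assumes only $\cE$ and invokes neither separability nor the sample-size condition. Your reading is arguably the more careful one: as literally stated, the lemma presupposes that $v$ sits inside a not-yet-exhausted true class, which only the refinement invariant guarantees, and you are right to flag the vacuous singleton/fully-merged case --- the paper simply asserts that $(i')$--$(iii')$ ``imply that there exists $u\in\cN(v)$'' without arguing where such a same-class $u$ comes from. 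The trade-off is that your version imports hypotheses the lemma does not carry, so as a proof of the lemma \emph{as stated} you should isolate the upper-bound half (which needs only $\cE$) and place the blocking/induction machinery where the paper does, in the proposition.

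One quantitative slip: since $\widehat d(u,v)=\big\|\widehat p^{\boldsymbol\sigma_{u,t}}_t(\cdot|u)-\widehat p^{\boldsymbol\sigma_{v,t}}_t(\cdot|v)\big\|_1-\epsilon_{u,t}-\epsilon_{v,t}$ and the reverse triangle inequality costs another $\epsilon_{u,t}+\epsilon_{v,t}$, the correct sandwich is
\als{
d(u,v)-2\big(\epsilon_{u,t}+\epsilon_{v,t}\big)\;\le\;\widehat d(u,v)\;\le\; d(u,v)\,,
}
not $d(u,v)-\epsilon_{u,t}-\epsilon_{v,t}\le\widehat d(u,v)$. Consequently $\widehat d(u,v)>0$ is only guaranteed when $d(u,v)>2(\epsilon_{u,t}+\epsilon_{v,t})$, which is precisely why the paper sets $f(n)=4\bW_n(\tfrac{\delta}{SA})$ rather than $2\bW_n(\tfrac{\delta}{SA})$. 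This error is harmless for the lemma (whose conclusion uses only the upper half of the sandwich) but would propagate to your blocking argument if left uncorrected.
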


\begin{proof}\textbf{(of Lemma \ref{lem:clustering_concentration})}
Fix $t\geq 1$ and round $k$, and consider $v\in \cC^k$. Recall that $u$ is a PAC Neighbor of $v$ if it satisfies:
\begin{align*}
(i)&\quad \big\|\widehat p^{\boldsymbol\sigma_{u,t}}_t(\cdot|u) - \widehat p^{\boldsymbol\sigma_{v,t}}_t(\cdot|v)\big\|_1 - \epsilon_{u,t} - \epsilon_{v,t} \leq 0\, ; \\
(ii)&\quad \big\|\widehat p_t(\sigma_{i,t}(\cdot)|i) - \widehat p_t(\sigma_{j,t}(\cdot)|j)\big\|_1 - \beta_{N_t(i)}\big(\tfrac{\delta}{SA}\big) - \beta_{N_t(j)}\big(\tfrac{\delta}{SA}\big) \leq 0\, ,\quad \forall i\in u,\forall j\in v; \\
(iii)&\quad \big\|\widehat p_t(\sigma_{\ell,t}(\cdot)|\ell) - \widehat p^{\boldsymbol\sigma_{u\cup v,t}}_t(\cdot|u \cup v)\big\|_1 - \beta_{N_t(\ell)}\big(\tfrac{\delta}{SA}\big) - \epsilon_{u\cup v,t} \leq 0\, ,\quad \forall \ell\in u\cup v.
\end{align*}

In order to prove the lemma, it suffices to show that under $\cE$, there exists $u\subset \cS\times \cA$ satisfying  $(i)$--$(iii)$ and $d(u,v)=0$. To this end, we will show that the event $\cE$ implies the following:
For all $u\in \cS\times \cA$,
\begin{align*}
(i')&\quad \big\|\widehat p^{\boldsymbol\sigma_{u,t}}_t(\cdot|u) - \widehat p^{\boldsymbol\sigma_{v,t}}_t(\cdot|v)\big\|_1 \leq d(u,v) + \epsilon_{u,t} + \epsilon_{v,t}\, ; \\
(ii')&\quad \big\|\widehat p_t(\sigma_{i,t}(\cdot)|i) - \widehat p_t(\sigma_{j,t}(\cdot)|j)\big\|_1 \leq d(\{i\},\{j\}) + \beta_{N_t(i)}\big(\tfrac{\delta}{SA}\big) + \beta_{N_t(j)}\big(\tfrac{\delta}{SA}\big) \, ,\quad \forall i\in u,\forall j\in v; \\
(iii')&\quad \big\|\widehat p_t(\sigma_{\ell,t}(\cdot)|\ell) - \widehat p^{\boldsymbol\sigma_{u\cup v,t}}_t(\cdot|u \cup v)\big\|_1 \leq d(\{\ell\},u\cup v) +  \beta_{N_t(\ell)}\big(\tfrac{\delta}{SA}\big) + \epsilon_{u\cup v,t}\, ,\quad \forall \ell\in u\cup v.
\end{align*}

Now, $(i')$--$(iii')$ imply that there exists $u\in \cN(v)$ such that $u$ and $v$ belong to the same class, and the lemma follows. It remains to prove $(i')$--$(iii')$.

\paragraph{Proof of $(i')$.}Consider $u\in \cS\times \cA$. Then, the non-expansive property of the norm function implies
\als{
\big\|\widehat p^{\boldsymbol\sigma_{u,t}}_t(\cdot|u) - \widehat p^{\boldsymbol\sigma_{v,t}}_t(\cdot|v)\big\|_1 &\leq \big\|p^{\boldsymbol\sigma_{u}}(\cdot|u) - p^{\boldsymbol\sigma_{v}}(\cdot|v)\big\|_1+ \big\|\widehat p^{\boldsymbol\sigma_{u,t}}_t(\cdot|u) - p^{\boldsymbol\sigma_{u}}(\cdot|u)\big\|_1 + \big\|\widehat p^{\boldsymbol\sigma_{v,t}}_t(\cdot|v) - p^{\boldsymbol\sigma_{v}}(\cdot|v)\big\|_1 \\
&= d(u,v) + \underbrace{\big\|\widehat p^{\boldsymbol\sigma_{u,t}}_t(\cdot|u) - p^{\boldsymbol\sigma_{u}}(\cdot|u)\big\|_1}_{A_1} + \underbrace{\big\|\widehat p^{\boldsymbol\sigma_{v,t}}_t(\cdot|v) - p^{\boldsymbol\sigma_{v}}(\cdot|v)\big\|_1}_{A_2} \, .
}
The term $A_1$ is upper bounded as follows:
\als{
A_1 &= \sum_{x\in \cS} \big\|\widehat p^{\boldsymbol\sigma_{u,t}}_t(x|u) - p^{\boldsymbol\sigma_{u}}(x|u)\big\|_1 \\
&= \sum_{x\in \cS} \bigg|\frac{1}{n_t(u)}\sum_{(s,a)\in u} N_t(s,a)\Big(\widehat p_t(\sigma_{s,a,t}(x)|s,a) - p(\sigma_{s,a}(x)|s,a)\Big) \bigg| \\
&\leq \frac{1}{n_t(u)} \sum_{(s,a)\in u} N_t(s,a)\sum_{x\in \cS} \Big|\widehat p_t(\sigma_{s,a,t}(x)|s,a) - p(\sigma_{s,a}(x)|s,a)\Big|  \\
&\leq \frac{1}{n_t(u)} \sum_{(s,a)\in u} N_t(s,a)\big\|\widehat p_t(\sigma_{s,a,t}(\cdot)|s,a) - p(\sigma_{s,a}(\cdot)|s,a)\big\|_1 \\
&\leq \frac{1}{n_t(u)} \sum_{(s,a)\in u}N_t(s,a) \big\|\widehat p_t(\cdot|s,a) - p(\cdot|s,a)\big\|_1 \, ,
}
where we have used Lemma \ref{lem:ordered} as well as the non-expansive property of the norm function.
Hence, under the event $\cE$,
\begin{equation}\label{eq:A_1}
\big\|\widehat p^{\boldsymbol\sigma_{u,t}}_t(\cdot|u) - p^{\boldsymbol\sigma_{u}}(\cdot|u)\big\|_1
\leq \frac{1}{n_t(u)} \sum_{(s,a)\in u}N_t(s,a)\beta_{N_t(s,a)}\big(\tfrac{\delta}{SA}\big) = \epsilon_{u,t} \, .
\end{equation}

A similar argument yields $A_2\leq \epsilon_{v,t}$ under $\cE$. Putting these together verifies $(i')$.

\paragraph{Proof of $(ii')$.} The proof of $(ii')$ is quite similar to that of $(i')$, hence omitted.

\paragraph{Proof of $(iii')$.} Consider $u\in \cS\times \cA$ and $\ell\in u\cup v$. We have
\als{
\big\|\widehat p_t(\sigma_{\ell,t}(\cdot)|\ell) - \widehat p^{\boldsymbol\sigma_{u\cup v,t}}_t&(\cdot|u \cup v)\big\|_1 \leq
\big\|p(\sigma_{\ell}(\cdot)|\ell) -  p^{\boldsymbol\sigma_{u\cup v}}(\cdot|u \cup v)\big\|_1 +
\big\|\widehat p_t(\sigma_{\ell,t}(\cdot)|\ell) - p(\sigma_{\ell}(\cdot)|\ell)\big\| \\
& + \big\| \widehat p^{\boldsymbol\sigma_{u\cup v}}(\cdot|u \cup v) - \widehat p^{\boldsymbol\sigma_{u\cup v,t}}_t(\cdot|u \cup v)\big\|_1 \\
&\leq d(\{\ell\},u\cup v) + \|\widehat p_t(\cdot|\ell) - p(\cdot|\ell)\|_1
+ \big\|  p^{\boldsymbol\sigma_{u\cup v}}(\cdot|u \cup v) - \widehat p^{\boldsymbol\sigma_{u\cup v,t}}_t(\cdot|u \cup v)\big\|_1\, ,
}
where we have used Lemma \ref{lem:ordered} and the non-expansive property of the norm function.
The third term in the right-hand side is bounded as follows:
\als{
\big\|  p^{\boldsymbol\sigma_{u\cup v}}(\cdot|u \cup v) - \widehat p^{\boldsymbol\sigma_{u\cup v,t}}_t(\cdot|u \cup v)\big\|_1
&\leq \sum_{(s,a)\in u\cup v} \frac{N_t(s,a)}{n_t(u) + n_t(v)}\sum_{x\in \cS} \Big|p(\sigma_{s,a}(x)|x) - \widehat p_t(\sigma_{s,a,t}(x)|s,a)\Big| \\
&= \sum_{(s,a)\in u\cup v} \frac{N_t(s,a)}{n_t(u) + n_t(v)} \big\|\widehat p_t(\sigma_{s,a,t}(\cdot)|s,a) - p(\sigma_{s,a}(\cdot)|s,a)\big\|_1 \\
&\leq \sum_{(s,a)\in u\cup v} \frac{N_t(s,a)}{n_t(u) + n_t(v)} \big\|\widehat p_t(\cdot|s,a) - p(\cdot|s,a)\big\|_1 \, .
}
Hence, when $\cE$ occurs,
$\big\|  p^{\boldsymbol\sigma_{u\cup v}}(\cdot|u \cup v) - \widehat p^{\boldsymbol\sigma_{u\cup v,t}}_t(\cdot|u \cup v)\big\|_1
\leq \epsilon_{u\cup v,t}$, so that
\als{
\big\|\widehat p_t(\sigma_{\ell,t}(\cdot)|\ell) - \widehat p^{\boldsymbol\sigma_{u\cup v,t}}_t(\cdot|u \cup v)\big\|_1 &\leq d(\{\ell\},u\cup v) + \beta_{N_t(\ell)}\big(\tfrac{\delta}{SA}\big) + \epsilon_{u\cup v,t}\, .
}
\end{proof}

We are now ready to prove the proposition.

\paragraph{Proof (of Proposition \ref{prop:clustering_guarantee})}
Fix $t\geq 1$, and consider $\alpha \to\infty$ (the choice $\alpha \geq \frac{t}{\max\{1,f^{-1}(\Delta)\}}$ suffices).  Assume that $\min_{s,a} N_t(s,a)> f^{-1}(\Delta)$, and that $\cE$ holds.
By Lemma \ref{lem:clustering_concentration}, we have that at any round of the algorithm, the set of PAC Neighbors of a given $v\in \cS\times\cA$ maintained by the algorithm contains some $u\in \cS\times \cA$ belonging to the same class as $v$.

We prove the theorem by induction. First we show that the best case holds, that is in the first iteration of the algorithm, (i) the algorithm avoids grouping state-action pairs belonging to different classes; and (ii) the algorithm groups all the pairs in the same class. Initially, all the classes are singletons. So in the first iteration, the algorithm starts with the  classes sorted according to a non-increasing order of number of samples, and then iteratively merges each class with its PAC Nearest Neighbor (see Definition \ref{def:PAC_NN}). Recall that for a partition $\cC$, ${\textsf{Near}}(c,\cC)$ denotes the PAC Nearest Neighbor of $\cC$:
$	{\textsf{Near}}(c,\cC) \in \argmin_{x\in \cN(c)} \widehat d(c,x)$. In the first round the algorithm, if $i,j\in \cS\times \cA$ are combined, then $\widehat d(\{i\},\{j\})\leq 0$. In view of the definition of $\widehat d(\cdot,\cdot)$, we deduce that
\begin{align}
\big\|\widehat p_t(\sigma_{i,t}(\cdot)|i) - \widehat p_t(\sigma_{j,t}(\cdot)|j)\big\|_1 - \beta_{N_t(i)}\big(\tfrac{\delta}{SA}\big) - \beta_{N_t(j)}\big(\tfrac{\delta}{SA}\big) \leq 0\, .
\label{eq:base_case}
\end{align}

In order to show that the algorithm makes no mistake, we need to show that $d(\{i\},\{j\})=0$. We have
\als{
d(\{i\}&,\{j\})= \|p(\sigma_i(\cdot)|i) - p(\sigma_{j}(\cdot)|j)\|_1 \\
&\leq  \|p(\sigma_i(\cdot)|i) - \widehat p_t(\sigma_{i}(\cdot)|i)\|_1 + \|p(\sigma_j(\cdot)|j) - \widehat p_t(\sigma_{j,t}(\cdot)|j)\|_1 + \|\widehat p_t(\sigma_{i,t}(\cdot)|i) - \widehat p_t(\sigma_{j,t}(\cdot)|j)\|_1\\
&\leq  \|p(\sigma_i(\cdot)|i) - \widehat p_t(\sigma_{i}(\cdot)|i)\|_1 + \|p(\sigma_j(\cdot)|j) - \widehat p_t(\sigma_{j}(\cdot)|j)\|_1 + \|\widehat p_t(\sigma_{i,t}(\cdot)|i) - \widehat p_t(\sigma_{j,t}(\cdot)|j)\|_1\\
&=  \|p(\cdot|i) - \widehat p_t(\cdot|i)\|_1 + \|p(\cdot|j) - \widehat p_t(\cdot|j)\|_1 + \|\widehat p_t(\sigma_{i,t}(\cdot)|i) - \widehat p_t(\sigma_{j,t}(\cdot)|j)\|_1\, ,
}
where the first inequality follows from the sub-additivity of the norm function, and the second follows from Lemma \ref{lem:ordered}.
Hence, under the event $\cE$, it holds that
\als{
d(\{i\},\{j\}) \leq   \beta_{N_t(i)}\big(\tfrac{\delta}{SA}\big) +  \beta_{N_t(j)}\big(\tfrac{\delta}{SA}\big) +  \|\widehat p_t(\sigma_{i,t}(\cdot)|i) - \widehat p_t(\sigma_{j,t}(\cdot)|j)\|_1\, .
}
Combining this with (\ref{eq:base_case}), we have under $\cE$,
\als{
d(\{i\},\{j\}) \leq 2 \beta_{N_t(i)}\big(\tfrac{\delta}{SA}\big) + 2 \beta_{N_t(j)}\big(\tfrac{\delta}{SA}\big) \, .
}
In view of the assumption $\min_{s,a} N_t(s,a)> f^{-1}(\Delta)$, and noting that $d(\{i\},\{j\})\geq \Delta$, we deduce that $d(\{i\},\{j\})\leq 0$, so that the base case holds.

Now assume that at the end of iteration $m$, the algorithm outputs a valid partition under $\cE$, namely,  it does not wrongly group pairs coming from different classes. We would like to show that  the partition obtained in iteration $m+1$ is valid, too.
To this end, consider $u,v\in \cC^{m}$ that are merged by the algorithm in round $m+1$, so that $u\cup v \in \cC^{m+1}$. First note that by Lemma \ref{lem:clustering_concentration}, for any $v\in \cC^m$, there exists $u'\in \cN(v)$ with $d(u',v)=0$. We need to show that $d(u,v)=0$. By construction, $u=\textsf{Near}(v,\cC^m)$, and so the following inequalities hold:
\begin{align*}
&\big\|\widehat p^{\boldsymbol\sigma_{u,t}}_t(\cdot|u) - \widehat p^{\boldsymbol\sigma_{v,t}}_t(\cdot|v)\big\|_1 - \epsilon_{u,t} - \epsilon_{v,t} \leq 0\, ; \\
&\big\|\widehat p_t(\sigma_{i,t}(\cdot)|i) - \widehat p_t(\sigma_{j,t}(\cdot)|j)\big\|_1 - \beta_{N_t(i)}\big(\tfrac{\delta}{SA}\big) - \beta_{N_t(j)}\big(\tfrac{\delta}{SA}\big)\leq 0\, ,\quad \forall i\in u,\forall j\in v; \\
&\big\|\widehat p_t(\sigma_{\ell,t}(\cdot)|\ell) - \widehat p^{\boldsymbol\sigma_{u\cup v,t}}_t(\cdot|u \cup v)\big\|_1 - \beta_{N_t(\ell)}\big(\tfrac{\delta}{SA}\big) - \epsilon_{u\cup v,t} \leq 0\, ,\quad \forall \ell\in u\cup v.
\end{align*}
Using similar steps as in the proof of Lemma \ref{lem:clustering_concentration}, it follows that
\als{
d(u,v) &\leq
  \|p^{\boldsymbol\sigma_u}(\cdot|u) - \widehat p^{\boldsymbol\sigma_{u,t}}_t(\cdot|u)\|_1 + \|p^{\boldsymbol\sigma_{v}}(\cdot|v) - \widehat p^{\boldsymbol\sigma_{v,t}}_t(\cdot|v)\|_1 + \|\widehat p^{\boldsymbol\sigma_{u,t}}_t(\cdot)|u) - \widehat p^{\boldsymbol\sigma_{v,t}}_t(\cdot|v)\|_1\, .
}
Using (\ref{eq:A_1}) in the proof of Lemma \ref{lem:clustering_concentration}, we arrive at
\als{
d(u,v) &\leq  \epsilon_{u,t} + \epsilon_{v,t} +  \big\|\widehat p^{\boldsymbol\sigma_{u,t}}_t(\cdot|u) - \widehat p^{\boldsymbol\sigma_{v,t}}_t(\cdot|v)\big\|_1 \\
&\leq 2\epsilon_{u,t} + 2\epsilon_{v,t} < 4\beta_{f^{-1}(\Delta)}\big(\tfrac{\delta}{SA}\big) \, .
}
We thus deduce that $d(u,v)= 0$, which  concludes the proof.
\ep


\section{Regret Analysis of \CUCRL$(\cC,\sigma)$: Proof of Theorem \ref{thm:regretKnownC}}\label{app:regretbound}

In this section, we prove Theorem \ref{thm:regretKnownC}, which provides an upper bound on the regret of \CUCRL$(\cC,\boldsymbol\sigma)$. We provide the proof for the case when the reward function is unknown to the learner too.
Our proof follows similar lines as in the proof of \cite[Theorem~2]{jaksch2010near}.
We first provide the following time-uniform concentration inequality to control a bounded martingale difference sequence, which follows from time-uniform Laplace concentration inequality:


\begin{lemma}[Time-uniform Azuma-Hoeffding]\label{lem:time-uniform-AzumaHoeffding}
	Let $(X_t)_{t\geq 1}$ be a martingale difference sequence bounded by $b$ for some $b>0$  (that is, $|X_t|\leq b$ for all $t$). Then, for all $\delta\in (0,1)$,
	\beqan
	\mathbb P\bigg(\exists T \in\Nat: \sum_{t=1}^T  X_t \geq b\sqrt{ 2(T+1) \log\big( \sqrt{T+1}/\delta\big)}\bigg) \leq \delta\,.
	\eeqan
\end{lemma}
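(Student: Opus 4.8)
The plan is to prove this via the \emph{method of mixtures} (pseudo-maximization) combined with Ville's maximal inequality for nonnegative supermartingales, which is the standard route to \emph{time-uniform} guarantees. Write $S_T := \sum_{t=1}^T X_t$ with $S_0 = 0$, and let $(\mathcal F_t)_t$ be the natural filtration of $(X_t)$. Since $(X_t)$ is a martingale difference sequence with $|X_t|\leq b$, the conditional Hoeffding lemma gives, for every $\lambda\in\Real$,
\[
\mathbb E\big[e^{\lambda X_t}\,\big|\,\mathcal F_{t-1}\big]\leq e^{\lambda^2 b^2/2}\,.
\]
Consequently, for each fixed $\lambda$ the process $M_T^\lambda := \exp\big(\lambda S_T - \tfrac12\lambda^2 b^2 T\big)$ is a nonnegative supermartingale with $M_0^\lambda = 1$, because $\mathbb E[M_T^\lambda\mid\mathcal F_{T-1}] = M_{T-1}^\lambda\, e^{-\lambda^2 b^2/2}\,\mathbb E[e^{\lambda X_T}\mid\mathcal F_{T-1}]\leq M_{T-1}^\lambda$.

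The crux is that a single $\lambda$ cannot be tuned without knowing $T$ in advance, so instead I would \emph{mix} over $\lambda$. Drawing $\lambda\sim\mathcal N(0,1/b^2)$ and integrating, Tonelli's theorem (using nonnegativity to interchange the integral with the conditional expectation) ensures that $M_T := \int_{\Real} M_T^\lambda\,\mathrm d F(\lambda)$ is again a nonnegative supermartingale with $M_0=1$. The key computation is the resulting Gaussian integral: the total exponent $\lambda S_T - \tfrac12\lambda^2 b^2(T+1)$ is quadratic in $\lambda$, and completing the square yields the closed form
\[
M_T = \frac{1}{\sqrt{T+1}}\,\exp\!\Big(\frac{S_T^2}{2b^2(T+1)}\Big)\,.
\]
Here the prior variance $1/b^2$ is chosen precisely so that the $\lambda^2 T$ term from the supermartingale and the prior's $\lambda^2$ term combine into the normalization factor $\sqrt{T+1}$ and the denominator $b^2(T+1)$.

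Finally, I would invoke Ville's inequality: for a nonnegative supermartingale with $\mathbb E[M_0]=1$, one has $\mathbb P(\exists T: M_T\geq 1/\delta)\leq \delta$. Unwinding the event $M_T\geq 1/\delta$ through the monotonicity of $\log$ gives exactly $S_T^2\geq 2b^2(T+1)\log(\sqrt{T+1}/\delta)$, i.e.\ $|S_T|\geq b\sqrt{2(T+1)\log(\sqrt{T+1}/\delta)}$. Since the one-sided event $\{S_T\geq b\sqrt{2(T+1)\log(\sqrt{T+1}/\delta)}\}$ is contained in the two-sided one, the stated claim follows a fortiori. The main technical points requiring care are justifying the supermartingale property of the mixture (the interchange of integration and conditional expectation, together with checking $M_0=1$) and the exact evaluation of the Gaussian integral with the specific prior variance that produces the $T+1$ normalization; the remaining manipulations are routine bookkeeping.
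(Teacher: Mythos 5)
Your proof is correct and follows exactly the route the paper intends: Lemma~\ref{lem:time-uniform-AzumaHoeffding} is stated there as a consequence of the Laplace method \citep{pena2008self,abbasi2011improved}, which is precisely your Gaussian mixture of the exponential supermartingales $\exp\big(\lambda S_T - \tfrac12\lambda^2 b^2 T\big)$ with prior variance $1/b^2$, followed by Ville's maximal inequality. The computation checks out, including the closed form $M_T = (T+1)^{-1/2}\exp\big(S_T^2/(2b^2(T+1))\big)$ and the one-sided containment at the end, so nothing is missing.
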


\begin{proof}\textbf{(of Theorem \ref{thm:regretKnownC})} Let $\delta\in (0,1)$. To simplify notations, we define the short-hand $J_k:=J_{t_k}$ for various random variables that are fixed within a given episode $k$ (for example $\cM_{k}:=\cM_{t_k}$). Denote by $m(T)$ the number of episodes initiated by the algorithm up to time $T$.
An application of Lemma \ref{lem:time-uniform-AzumaHoeffding} yields:
\als{
\kR(T)&= \sum_{t=1}^T g_\star- \sum_{t=1}^T r_t(s_t,a_t) \leq \sum_{s,a} N_{m(T)}(s,a)(g_\star - \mu(s,a)) + \sqrt{ \tfrac{1}{2}(T+1) \log( \sqrt{T+1}/\delta)}\, ,
}
with probability at least $1-\delta$. We have
\als{
\sum_{s,a} N_{m(T)}(s,a)(g_\star - \mu(s,a)) &= \sum_{k=1}^{m(T)}  \sum_{s,a}  \sum_{t=t_k+1}^{t_{k+1}}\indic{s_t=s,a_t=a} \big(g_\star - \mu(s,a)\big)\\
&= \sum_{k=1}^{m(T)}  \sum_{s,a}  \nu_k(s,a)\big(g_\star - \mu(s,a)\big)\,.
}
Defining $\nu_k(c):=\sum_{s,a} \nu_k(s,a)$ for $c\in \cC$, we further obtain
\als{
\sum_{s,a} N_{m(T)}(s,a)(g_\star - \mu(s,a)) &= \sum_{k=1}^{m(T)}  \sum_{c\in\cC}  \nu_k(c) \big(g_\star - \mu(c)\big) \, ,
}
where we have used that $\mu(s,a)$ has constant value $\mu(c)$ for all $(s,a)\in c$. For $1\leq k\leq m(T)$, we define the regret of episode $k$ as
$\Delta_k = \sum_{c\in\cC}\nu_k(c) \big(g_\star - \mu(c)\big)$. Hence, with probability at least $1-\delta$,
\als{
\kR(T) \leq \sum_{k=1}^{m(T)} \Delta_k + \sqrt{ \tfrac{1}{2}(T+1) \log( \sqrt{T+1}/\delta)}\, .
}
We say an episode is \emph{good} if $M \in \cM_{k}$ (that is, the set $\cM_{k}$ of plausible MDPs contains the true model), and \emph{bad} otherwise.

\paragraph{Control of the regret due to bad episodes ($M \notin \cM_{k}$).}
Due to using time-uniform instead of time-instantaneous confidence bounds, we can show that with high probability, all episodes are good for $T\in \Nat$. More precisely,
with probability higher than $1-2\delta$, for all $T$, bad episodes do not contribute to the regret:
\beqan
\sum_{k=1}^{m(T)}\Delta_k\indic{M \notin \cM_{k}} = 0\,.
\eeqan

\paragraph{Control of the regret due to good episodes ($M \in \cM_{k}$).} We closely follow \citep{jaksch2010near} and decompose the regret to control the transition and reward functions.
At a high level, we make two major modifications as follows. (i) We use the time-uniform bound stated in Lemma \ref{lem:time-uniform-AzumaHoeffding} to control the martingale difference sequence that appears; and
(ii) as the stopping criterion of \CUCRL($\cC,\boldsymbol\sigma$) slightly differs from that of \UCRL, we use the following lemma to control the number $m(T)$ of episodes:
\begin{lemma}[Number of episodes]\label{lem:NbEpisodes}
	The number $m(T)$ of episodes of \CUCRL($\cC,\boldsymbol\sigma$) up to time $T \geq C$ is upper bounded by:
	\begin{equation*}
	m(T) \leq C\log_2(\tfrac{8T}{C})\, .
	\end{equation*}
\end{lemma}

Consider a good episode $k$ (hence, $M \in \cM_{k}$). The \texttt{EVI} algorithm outputs a policy $\pi^+_{k}$ and $\widetilde M_{k}$ satisfying  $g_{\pi^+_{k}}^{\widetilde M_{k}} \geq g_\star - \frac{1}{\sqrt{t_k}}$. Let us define $g_k:= g_{\pi^+_{k}}^{\widetilde M_{k}}$. It then follows that
\begin{equation}
\Delta_k = \sum_{c \in \cC} \nu_k(c) \big(g_\star - \mu(c)\big) \leq \sum_{c \in \cC} \nu_k(c)
\big(g_k - \mu(c)\big) + \sum_{c \in \cC} \frac{\nu_k(c)}{\sqrt{t_k}} \, .
\label{eq:delta_init}
\end{equation}
Using the same argument as in the proof of \cite[Theorem~2]{jaksch2010near}, the value function $u_k^{(i)}$ computed by \texttt{EVI} at the last iteration $i$ satisfies: $\max_{s} u_k^{(i)}(s) - \min_{s} u_k^{(i)}(s) \!\leq\! D$. Moreover, the convergence criterion of \texttt{EVI} implies
\begin{equation}
|u_k^{(i+1)}(s) - u_k^{(i)}(s) - g_k| \leq \frac{1}{\sqrt{t_k}}\,, \qquad  \forall s \in \cS\, .
\label{eq:puterman_convergence}
\end{equation}

By the design of \texttt{EVI}, we have
$
u_k^{(i+1)}(s) = \widetilde\mu_{k}(s,\pi^+_{k}(s)) + \sum_{x} \widetilde p_k(x|s,\pi^+_{k}(s)) u_k^{(i)}(x) \,.
$
Substituting this into (\ref{eq:puterman_convergence}) gives
\begin{equation*}
\Big|\Big( g_k - \widetilde\mu_{k}(s,\pi_k^+(s)) \Big) - \Big(\sum_{x} \widetilde p_{k}(x|s,\pi_k^+(s))u_k^{(i)}(x) - u_k^{(i)}(s)\Big)\Big|
\leq \frac{1}{\sqrt{t_k}}\, , \qquad \forall s\in \cS\, .
\end{equation*}
Defining
${\bf g}_k = g_k \mathbf 1$, $\widetilde{\boldsymbol{\mu}}_{k} := \big(\widetilde\mu_{k}(s,\pi_k^+(s))\big)_{s\in \cS}$, $\widetilde{\mathbf{P}}_k := \big(\widetilde p_{k}(x|s,\pi_k^+(s))\big)_{s, x\in \cS}$, and $\nu_{k} := \big(\nu_k\big(s,\pi_k^+(s))_{s\in \cS}$, we can rewrite the above inequality as:
\begin{equation*}
\Big|{\bf g}_k - \widetilde{\boldsymbol{\mu}}_{k} - (\widetilde{\mathbf{P}}_k - \mathbf{I}) u_k^{(i)} \Big|
\leq \frac{1}{\sqrt{t_k}}\mathbf 1\,.
\end{equation*}
Combining this with (\ref{eq:delta_init}) yields
\begin{align}
\Delta_k&\leq \sum_{s,a} \nu_k(s,a) \big(g_k-\mu(s,a)\big) + \sum_{s,a} \frac{\nu_k(s,a)}{\sqrt{t_k}} \nonumber\\
&= \sum_{s,a} \nu_k(s,a) \big( g_k - \widetilde\mu_{k}(s,a) \big) + \sum_{s,a} \nu_k(s,a) \big( \widetilde\mu_{k}(s,a) - \mu(s,a) \big) + \sum_{s,a}
\frac{\nu_k(s,a)}{\sqrt{t_k}} \nonumber\\
&\leq \nu_{k} (\widetilde{\mathbf{P}}_k - \mathbf{I} ) u_k^{(i)} + \sum_{s,a} \nu_k(s,a) \big(\widetilde\mu_{k}(s,a) - \mu(s,a) \big) + 2 \sum_{s,a} \frac{\nu_k(s,a)}{\sqrt{t_k}}\, .\nonumber
\end{align}
Similarly to \citep{jaksch2010near}, we define $w_k(s) := u_k^{(i)}(s) - \tfrac{1}{2}(\min_s u_k^{(i)}(s) + \max_s u_k^{(i)}(s))$ for all $s\in \cS$. Then, in view of the fact that $\widetilde{\mathbf{P}}_k$ is row-stochastic, we obtain

\begin{align}
\Delta_k&\leq \nu_{k} (\widetilde{\mathbf{P}}_k - \mathbf{I} ) w_k + \sum_{s,a} \nu_k(s,a) \big(\widetilde \mu_{k}(s,a) - \mu(s,a) \big) + 2 \sum_{s,a} \frac{\nu_k(s,a)}{\sqrt{t_k}}\, .
\end{align}
The second term in the right-hand side can be upper bounded as follows. Fix pair $(s,a)$ and let $c_{s,a}$ denote the cluster to which $(s,a)$ belongs. The fact $M \in \cM_{k}$ implies
\als{
\widetilde\mu_{k}(s,a) - \mu(s,a) &\leq |\widetilde\mu_{k}(s,a) - \widehat \mu_{k}(s,a)| + |\widehat \mu_{k}(s,a) - \mu(s,a)| \leq 2\beta'_{n_{k}(c_{s,a})}(\tfrac{\delta}{C}) \\
    &= 2\sqrt{\frac{1}{2n_{k}(c_{s,a})}\Big(1\!+\!\frac{1}{n_{k}(c_{s,a})}\Big)\log\Big(C\sqrt{n_{k}(c_{s,a})\!+\!1}/\delta\Big)} \\
    &\leq 2\sqrt{\frac{1}{n_k(c_{s,a})}\log\big(C\sqrt{T+1}/\delta\big)}\, ,
}
where we have used $1\leq n_k(c_{s,a}) \leq T$ in the last inequality. Using this bound and noting that $t_k\geq n_k(c)$, we obtain
\begin{align}
\Delta_k
&\leq \nu_{k} (\widetilde{\mathbf{P}}_k-\mathbf{I})w_k + 2\Big(\sqrt{\log\big(C\sqrt{T+1}/\delta\big)} + 1\Big)
\sum_{c\in \cC} \frac{\nu_k(c)}{\sqrt{n_{k}(c)}}\, .
\label{eq:main_delta_minmk}
\end{align}

In what follows, we derive an upper bound on $\nu_{k} (\widetilde{\mathbf{P}}_k-\mathbf{I})w_k$. Similarly to \citep{jaksch2010near}, we consider the following decomposition:
\als{
\nu_k(\widetilde{\mathbf{P}}_k - \mathbf{I}) w_k = \underbrace{\nu_k (\widetilde{\mathbf{P}}_k - \mathbf{P}_k) w_k}_{L_1(k)} + \underbrace{\nu_k (\mathbf{P}_k-\mathbf{I})w_k}_{L_2(k)} \, .
}
Noting that $\|w_k\|_\infty \leq \frac{D}{2}$, we upper bound $L_1(k)$ as follows:
\begin{align}
L_1(k) &\leq \sum_{s,a}\nu_k(s,a) \big(\widetilde p_{k}(s'|s,a) - p(s'|s,a)\big) w_k(s') \sk
&\leq \sum_{s,a} \nu_k(s,a) \|\widetilde p_{k}(\cdot|s,a)- p(\cdot|s,a) \|_1 \| w_k\|_\infty \sk
&\leq D \sum_{s,a} \nu_k(s,a) \bW_{n_{k}(c_{s,a})}(\tfrac{\delta}{C}) \sk
&= D \sum_{c\in \cC} \nu_k(c) \bW_{n_{k}(c)}(\tfrac{\delta}{C}) \sk
&\leq 2D \sqrt{\log\big(C2^{S}\sqrt{T+1}/\delta\big)} \sum_{c\in \cC} \frac{\nu_k(c)}{\sqrt{n_{k}(c)}} \, .
\label{eq:trans_prob_first_bound_minmk}
\end{align}

To upper bound $L_2(k)$, similarly to the
proof of \citep[Theorem~2]{jaksch2010near}, we define the  sequence $(X_t)_{t\geq 1}$, with $X_t := (p(\cdot|s_t,a_t) - \mathbf e_{s_{t+1}})w_{k_t}\bI\{M \in \cM_{k_t}\}$, for all $t$, where $k_t$ denotes the episode containing step $t$. Note that $\mathbb{E}[X_t|s_1, a_1, \dots, s_t, a_t] = 0$, so $(X_t)_{t\geq 1}$ is martingale difference sequence. Furthermore, $|X_t|\leq D$: Indeed, for all $t$, by the H\"{o}lder inequality,
$$
|X_t| \leq \|p(\cdot|s_t,a_t) - \mathbf e_{s_{t+1}}\|_1\|w_{k(t)}\|_\infty \leq \Big(\|p(\cdot|s_t,a_t)\|_1 + \|\mathbf e_{s_{t+1}}\|_1\Big)\frac{D}{2}  = D\,.
$$
Using similar steps as in \citep{jaksch2010near}, for any $k$ with $M \in \cM_{k}$, we have that:
\begin{align*}
L_2(k) &\leq \sum_{t=t_k}^{t_{k+1} -1} X_t + D\, ,
\end{align*}
so that $\sum_{k=1}^{m(T)} L_2(k) \leq \sum_{t=1}^T X_t +m(T)D$. Therefore, by  Lemma \ref{lem:time-uniform-AzumaHoeffding}, we deduce that with probability at least $1-\delta$,
\begin{align}
\sum_{k=1}^{m(T)} L_2(k) &\leq D \sqrt{2 (T+1) \log(\sqrt{T+1}/\delta)} + m(T)D \sk
&\leq D \sqrt{2 (T+1) \log(\sqrt{T+1}/\delta)}  + DC\log_2(\tfrac{8T}{C})\, ,
\label{eq:trans_prob_second_overall_minmk}
\end{align}
where the last step follows from Lemma \ref{lem:NbEpisodes}.

\paragraph{Final control.}Combing (\ref{eq:main_delta_minmk})--(\ref{eq:trans_prob_second_overall_minmk}) and summing over all episodes give:
\begin{align}
\sum_{k=1}^{m(T)} \Delta_k &\bI\{M \in \cM_{k}\} \leq
\sum_{k=1}^{m(T)} L_1(k) + \sum_{k=1}^{m(T)} L_2(k) + 2\Big(\sqrt{\log\big(C\sqrt{T+1}/\delta\big)} + 1\Big)\sum_{k=1}^{m(T)} \sum_{c \in \cC} \frac{\nu_k(c)}{\sqrt{n_{k}(c)}} \nonumber \\
&\leq 2\Big(D \sqrt{\log\big(C2^S\sqrt{T+1}/\delta\big)} + \sqrt{\log\big(C\sqrt{T+1}/\delta\big)} + 1\Big) \sum_{k=1}^{m(T)}
\sum_{c\in \cC} \frac{\nu_k(c)}{\sqrt{n_{k}(c)}}\nonumber\\
& + D\sqrt{ 2 (T+1) \log\big( \sqrt{T+1}/\delta\big)} + DC\log_2(\tfrac{8T}{C})	 \, ,	
\label{eq:trans_prob_intermed_minmk}
\end{align}
with probability at least $1-\delta$. To upper bound the right-hand side, we recall the following lemma:

\begin{lemma}[{\cite[Lemma~19]{jaksch2010near}}]\label{lem:sequence}
	For any sequence of numbers $z_1, z_2, \dots, z_n$ with $0 \leq z_k \leq Z_{k-1} := \max\{1, \sum_{i=1}^{k-1}z_i\}$,
	$$
	\sum_{k=1}^n 	\frac{z_k}{\sqrt{Z_{k-1}}} 	\leq 	\big(\sqrt{2} + 1\big) 	\sqrt{Z_n}\, .
	$$
\end{lemma}

Note that $n_{k}(c) = \sum_{k'<k} \nu_{k'}(c)$. Hence, applying Lemma \ref{lem:sequence} gives
\begin{equation*}
\sum_{c \in \cC} \sum_{k=1}^{m(T)} \frac{\nu_k(c)}{\sqrt{n_{k}(c)}}  \leq
\sum_{c \in \cC} \big(\sqrt{2} + 1\big) \sqrt{n_{m(T)}(c)} \le	\big( \sqrt{2} + 1 \big) \sqrt{CT}\, ,
\end{equation*}
where the last step follows from Jensen's inequality and $\sum_c n_{m(T)}(c) = T$. Therefore,
\begin{align*}
\sum_{k=1}^{m(T)} \Delta_k \bI\{M \in \cM_{k}\}
&\leq  D\sqrt{ 2 (T+1) \log\big( \sqrt{T+1}/\delta\big)} + DC\log_2(\tfrac{8T}{C}) \nonumber\\
&+ 2\big(\sqrt{2} + 1 \big)\Big(D \sqrt{\log\big(C2^{S}\sqrt{T+1}/\delta\big)} + \sqrt{\log\big(C\sqrt{T+1}/\delta\big)} + 1\Big)  \sqrt{CT}\, ,
\end{align*}
with probability of at least $1 - \delta$. Finally, the regret of \CUCRL($\cC,\sigma$) is controlled on an event of probability higher than $1-2\delta-\delta-\delta$, uniformly over all $T$,
by
\als{
\kR(T) &\leq 2\big(\sqrt{2} + 1\big)\Big(D \sqrt{\log\big(C2^{S}\sqrt{T+1}/\delta\big)} + \sqrt{\log\big(C\sqrt{T+1}/\delta\big)} + 1\Big)  \sqrt{CT}\\
& +D\sqrt{ 2 (T+1) \log\big( \sqrt{T+1}/\delta\big)} + DC\log_2(\tfrac{8T}{C}) + \sqrt{ \tfrac{1}{2}(T+1) \log\big( \sqrt{T+1}/\delta\big)}\\
&\leq 18\sqrt{CT\big(S + \log(C\sqrt{T+1}/\delta)\big)} +  DC\log_2(\tfrac{8T}{C}) \, ,}
thus completing the proof. We finally note that when the mean reward function is known, as in the main text, the above bound holds with a probability higher than $1-3\delta$.
\end{proof}

\subsection{Proof of Lemma \ref{lem:NbEpisodes}}
The proof uses similar steps as in the proof of Proposition 18 in \citep{jaksch2010near}.

Recall that given $c$, $N_T(c)$ and $\nu_k(c):=\nu_{t_k}(c)$ denote as the total number of state-action observations, up to step $T$ and in episode $k$, respectively. 
For any $c$, let $K(c)$ denote the number of episodes where a state-action pair from $c$ is sampled: $K(c)=\sum_{k=1}^{m(T)} \bI\{\nu_k(c)>0\}$. It is worth mentioning that if $n_{k}(c) > 0$ and $\nu_k(c) = n_{k}(c)$, by the design of the algorithm, $n_{k+1}(c) = 2n_{k}(c)$. Hence,
\als{
n_{m(T)}(c) = \sum_{k = 1}^{m(T)} \nu_k(c) \geq 	1 + \sum_{k: \nu_k(c) = n_{k}(c)}
n_{k}(c) 	\geq 	1 + \sum_{i = 1}^{K(c)}2^{i-1}  = 2^{K(c)} \, .
}

If $n_{m(T)}(c) = 0$, then $K(c) = 0$, so that $n_{m(T)}(c) \geq 2^{K(c)} - 1$ for all $c$. Thus,
\begin{equation*}
T = \sum_{c \in \cC} n_{m(T)}(c) \geq \sum_{c \in \cC} \big(2^{K(c)} - 1\big)
\end{equation*}
On the other hand, an episode has happened when either $n_{k}(c) = 0$ or $n_k(c) = \nu_k(c)$. Therefore, $m(T) \leq 1 + C + \sum_{c \in \cC} K(c)$ and consequently, $\sum_{c \in \cC} K(c) \geq m(T) - 1 - C$. Hence, by Jensen's inequality, we obtain
\begin{equation*}
\sum_{c \in \cC}2^{K(c)} \geq C2^{\sum_{c \in \cC}\frac{K(c)}{C}} \geq C2^{\frac{m(T)-1}{C}-1} \, .
\end{equation*}
Putting together, we obtain $T \geq C\big(2^{\frac{m(T)-1}{C} - 1} - 1\big)$. Therefore,
\begin{equation*}
m(T) \leq 1 + 2C + C\log_2(\tfrac{T}{C}) \leq 3C + C\log_2(\tfrac{T}{C}) \leq C\log_2(\tfrac{8T}{C})\, ,
\end{equation*}
thus concluding the proof.
\ep

\section{Environments Used in Numerical Experiments}\label{app:simul}
In this section, we provide further details for the environments used in numerical experiments in Section \ref{sec:xps}.

\subsection{RiverSwim and Ergodic RiverSwim}
In the first set of experiments, we examined the performance of various algorithms in \emph{RiverSwim} environments. Figures \ref{fig:commu_river_swim} and \ref{fig:ergodic_river_swim} respectively display the $L$-state \emph{RiverSwim} and ergodic \emph{RiverSwim} environments.

\begin{figure}[!th]
\centering
\scriptsize
\def\svgwidth{0.7\columnwidth}
\input{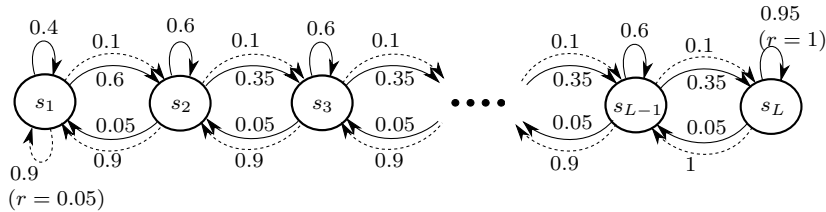}
\caption{The $L$-state \emph{Ergodic RiverSwim} MDP}
\label{fig:ergodic_river_swim}
\end{figure}

\begin{figure}[!th]
    \begin{center}
    \scriptsize
    \def\svgwidth{0.7\columnwidth}
	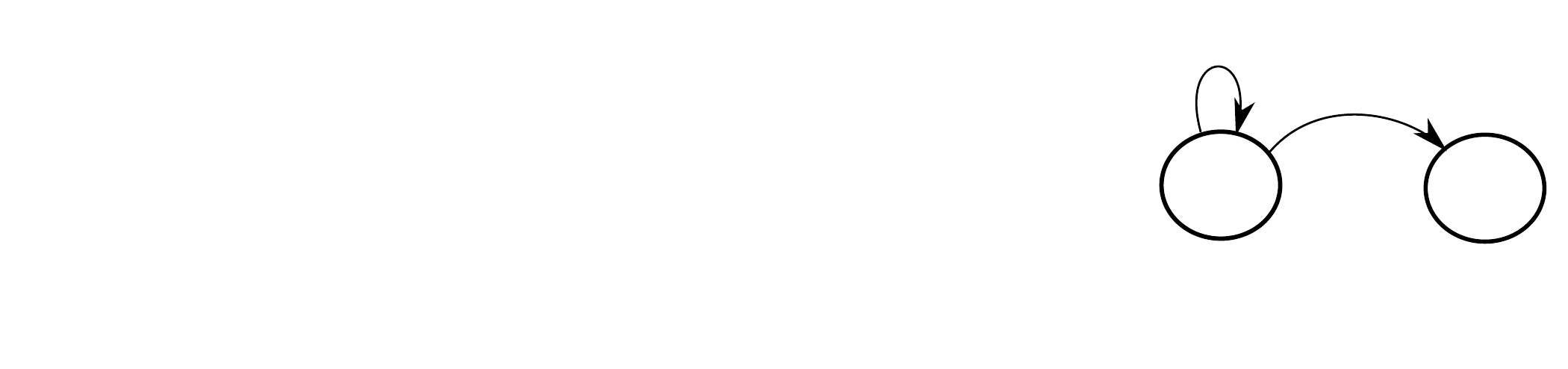
	\caption{The $L$-state \emph{RiverSwim} MDP}
    \label{fig:commu_river_swim}
\end{center}
\end{figure}

\subsection{Grid-World}

We conducted our last set of experiments in a $7 \times 7$ grid-world environment shown in Figure \ref{fig:4-room}, which we refer to as the 4-room grid-world. This MDP comprises 20 states ($S=20$). In this environment, the initial state is the upper-left corner (shown in red). When the learner reaches the lower-right corner (shown in yellow), a reward of 1 is given, and the learner is sent back to the initial state. The learner can perform four actions ($A=4$): Going up, left, down, or right. After playing a given action, the learner stays in the same state with probability 0.1, moves to the desired direction with probability 0.7 (for example, to the left, if the learner chooses to `go left'), and moves to other possible directions with probability 0.2. Walls act as \emph{reflectors}: When the next state is a wall, the transition probability of it is added to that of the current state.

\begin{figure}[!th]
	 \begin{center}
		\includegraphics[width=.5\linewidth]{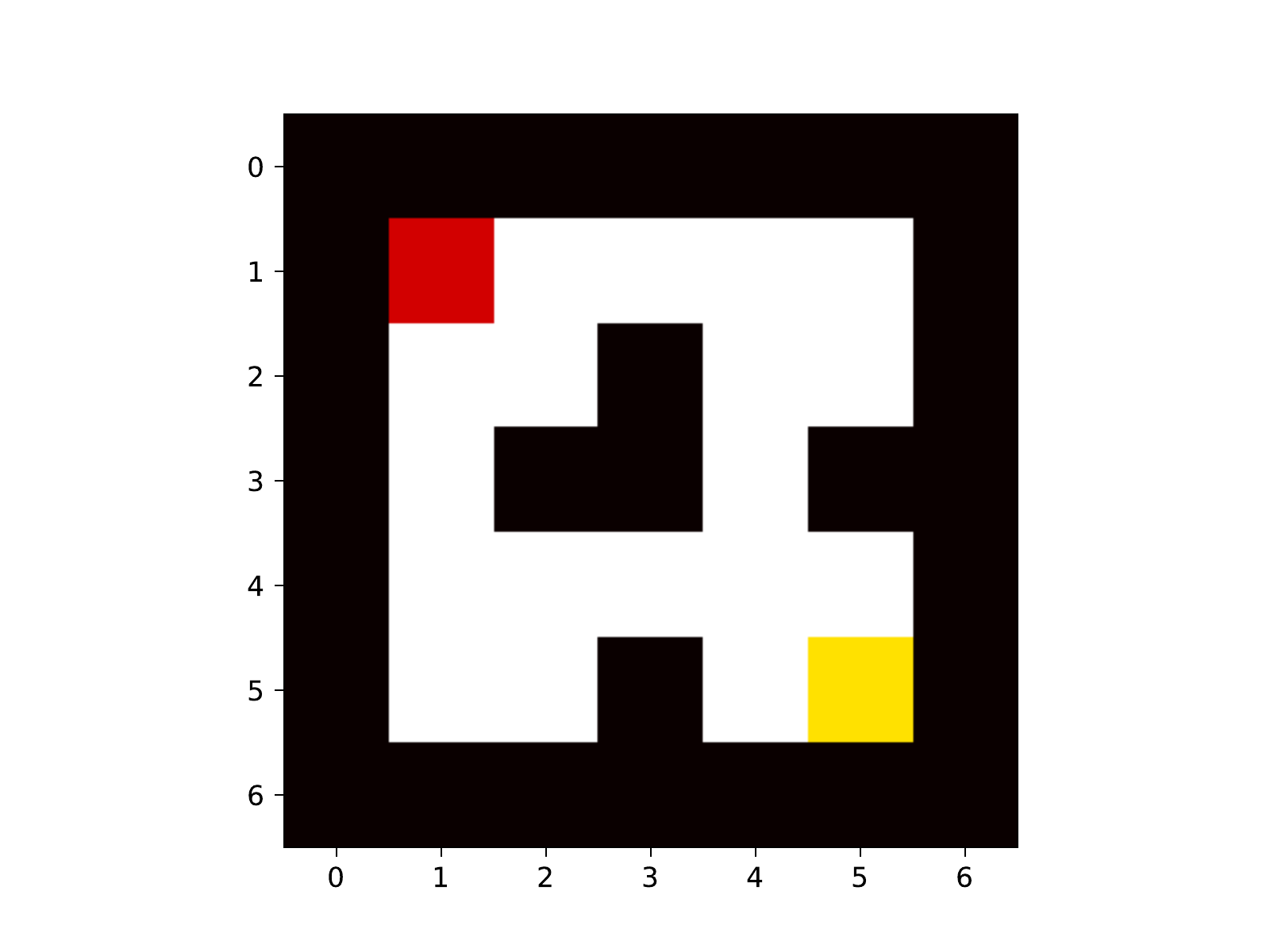}
    \caption{The 4-room grid-world MDP}
		\label{fig:4-room}
        \end{center}
\end{figure}

\section{Other Examples of MDPs}\label{app:examples}

In this section, we examine the notion of equivalence presented in Definition~\ref{def:equi_class} on some grid-world environments.

For this purpose, we consider grid-world MDPs. The action-space is $\{ u,d,l,r \}$.
Playing action $a=u$ moves the current state `up' with probability $0.8$, does not change the current state with probability $0.1$, and moves left or right with the same probability $0.05$. Walls act as \emph{reflectors}: When the next state is a wall, the transition probability of it is added to that of the current state. Other actions are defined in a similar way. Finally, the goal-state is put in the bottom-right corner of the MDP, where the learner is given a reward of 1.

Below, we show four examples of grid-world environments defined according to the above scheme, with different numbers of state-action pairs. The number of state-action pairs in the introduced 4-room and 2-room MDPs changes as the grid size grows, while keeping the number of classes almost fixed:

\begin{center}
\begin{tabular}{|c|c|c|c|c|}
\hline
	grid-world &
	Figure~\ref{fig:comp1} &
	Figure~\ref{fig:comp2} &
	Figure~\ref{fig:comp3} &	
	Figure~\ref{fig:comp4}\\ \hline
	$SA$ & $84$ & $800$ & 736 & $\sim 10^4$\\ \hline
	$C$ &6 & 6 & 7 &	7\\ \hline
\end{tabular}
\end{center}

\begin{center}
	\begin{tabular}{|c|c|c|c|c|c|} \hline
		Environment & States& $5\times 5$ & $7\times 7$ & $9\times 9$ & $100\times 100$ \\ \hline
		4-Room & $SA$ & $100$ & $196$ & $324$ &$4\times 10^4$\\ \hline
		4-Room & $C$ & 3& 3 & 3 & 3\\ \hline
		2-Room & $SA$ & $100$ & $196$ & $324$ &$4\times 10^4$\\ \hline
		2-Room & $C$ & 4& 4 & 4 & 4\\		\hline
	\end{tabular}
\end{center}
We stress that other notions of similarity from the RL literature do not scale well. For instance, in \citep{ortner2013adaptive}, a partition $\cS_1,\dots\cS_n$ of the state-space $\cS$ is considered to define an aggregated MDP, which satisfies: For all $i\in \{1,\ldots,n\}$,
\beqan
\forall s,s' \in\cS_i, \forall a\in\cA, \qquad   &\mu(s,a)= \mu(s',a)\,,\\
\forall j, \qquad  &\sum_{s'' \in\cS_j}p(s''|s,a) = \sum_{s'' \in\cS_j}p(s''|s',a)\,.
\eeqan
This readily prevents any two states $s,s'$ such that $p(\cdot|s,a)$ and $p(\cdot|s',a)$ have disjoint supports from being in the same set $\cS_i$.
Thus, since in a grid-world MDP, where transitions are local, the number of pairs with disjoint support is (almost linearly) increasing with $S$, this implies a potentially large number of classes for grid-worlds with many states.
A similar criticism can be formulated for \citep{anand2015asap}, even though it considers sets  of state-action pairs instead of states only, thus slightly reducing the total number of classes.

\begin{figure}[!hbtp]
	\includegraphics[width=0.5\textwidth]{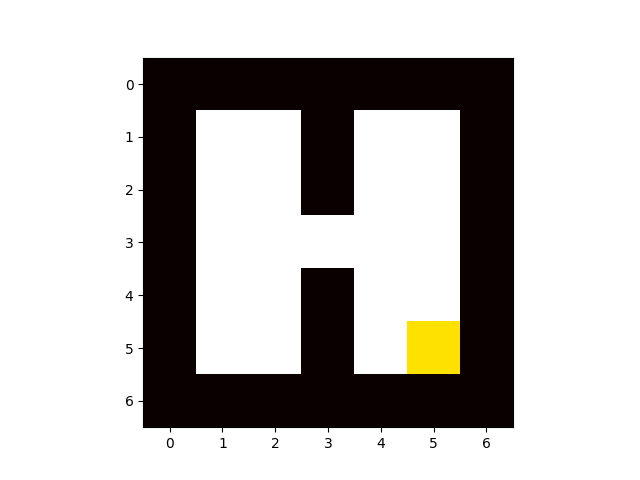}
	\hfill	
	\includegraphics[width=0.5\textwidth]{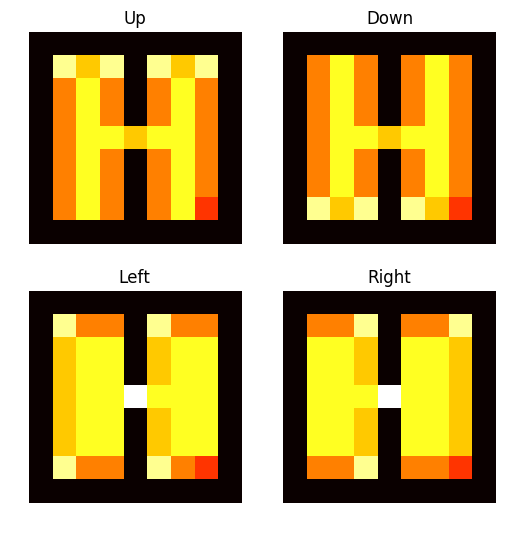}
	
	\caption{
		Left: Two-room grid-world (left) with walls in black, and goal state in yellow.
		Right: equivalence classes for state-action pairs (one color per class).
	}
\label{fig:comp1}
\end{figure}

\begin{figure}[!hbtp]
	\includegraphics[width=0.5\textwidth]{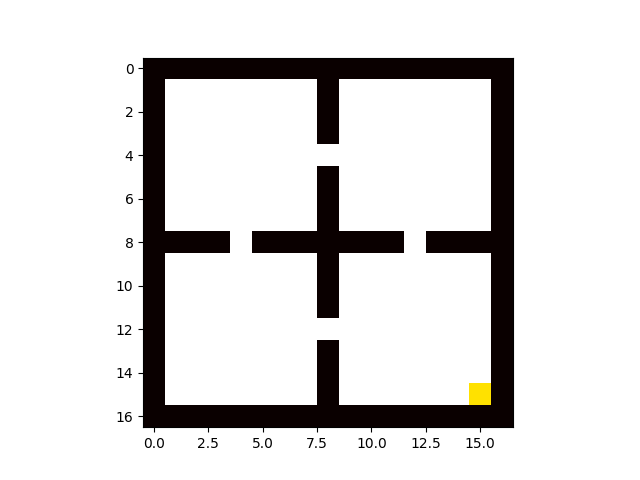}
	\hfill	
	\includegraphics[width=0.5\textwidth]{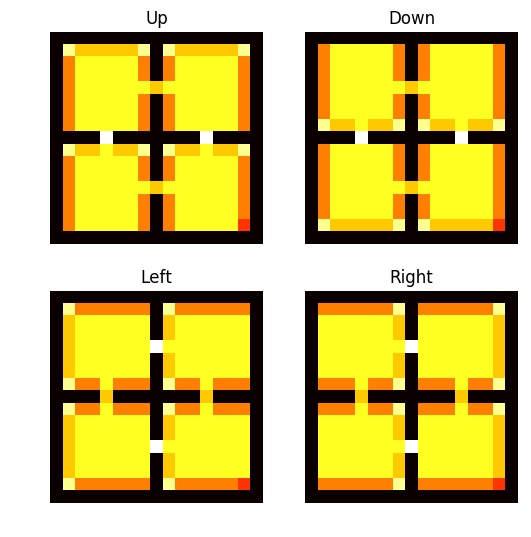}
	
	\caption{Left: Four-room grid-world (left) with walls in black, and goal state in yellow.
		Right: equivalence classes for state-action pairs (one color per class).
	}\label{fig:comp2}
\end{figure}

\begin{figure}[!hbtp]
	\includegraphics[width=0.5\textwidth]{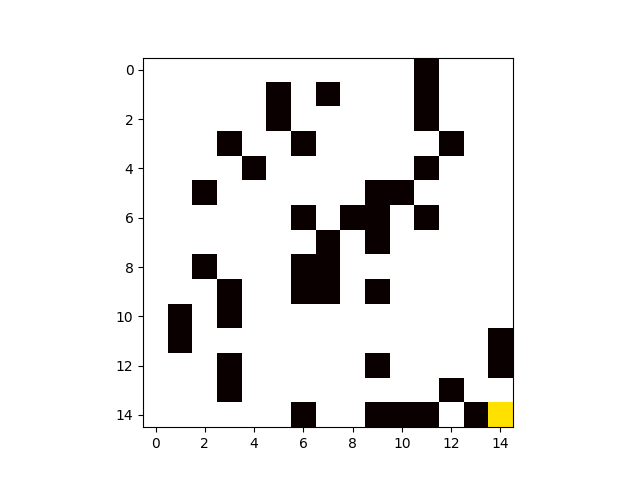}
	\hfill	
	\includegraphics[width=0.5\textwidth]{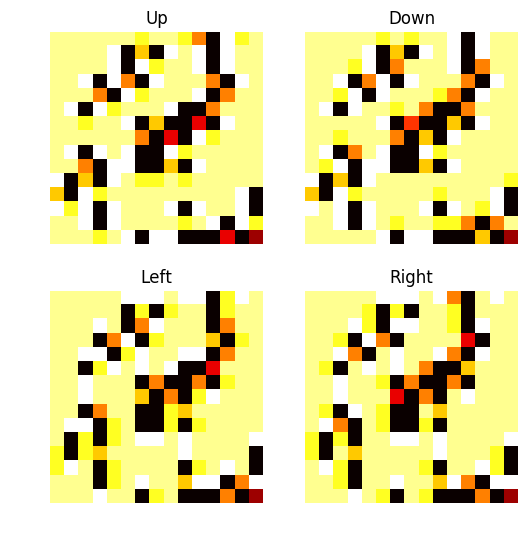}

	\caption{Left: A more complex grid-world (left) with walls in black, and goal state in yellow.
		Right: equivalence classes for state-action pairs (one color per class).}
\label{fig:comp3}
\end{figure}

\begin{figure}[!hbtp]
	\includegraphics[width=0.5\textwidth]{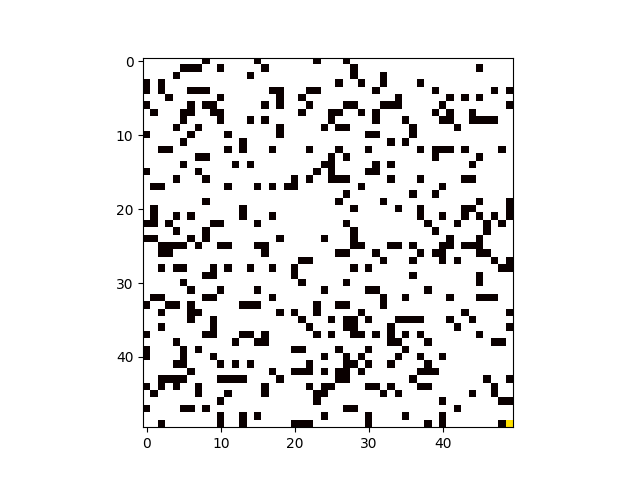}
	\hfill	
	\includegraphics[width=0.5\textwidth]{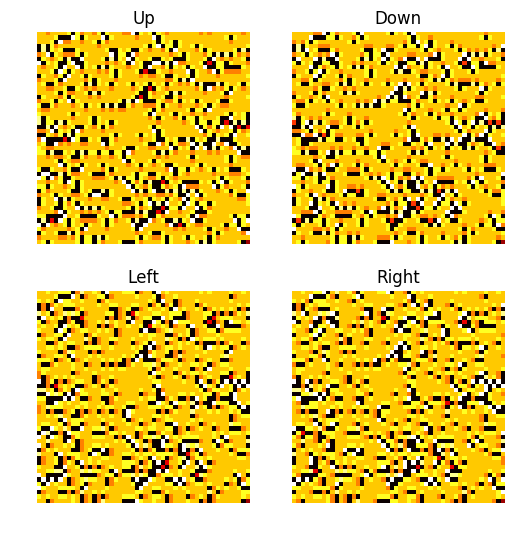}
		\caption{Left: A more complex grid-world (left) with walls in black, and goal state in yellow.
		Right: equivalence classes for state-action pairs (one color per class).
	}\label{fig:comp4}
\end{figure}

\newpage
\bibliography{ACML2019_short_bib}

\begin{thebibliography}{35}
\providecommand{\natexlab}[1]{#1}
\providecommand{\url}[1]{\texttt{#1}}
\expandafter\ifx\csname urlstyle\endcsname\relax
  \providecommand{\doi}[1]{doi: #1}\else
  \providecommand{\doi}{doi: \begingroup \urlstyle{rm}\Url}\fi

\bibitem[Abbasi-Yadkori et~al.(2011)Abbasi-Yadkori, P{\'a}l, and
  Szepesv{\'a}ri]{abbasi2011improved}
Y.~Abbasi-Yadkori, D.~P{\'a}l, and Cs. Szepesv{\'a}ri.
\newblock Improved algorithms for linear stochastic bandits.
\newblock In \emph{Proc.~of NIPS}, pages 2312--2320, 2011.

\bibitem[Abel et~al.(2016)Abel, Hershkowitz, and Littman]{abel2016near}
D.~Abel, D.~Hershkowitz, and M.~L. Littman.
\newblock Near optimal behavior via approximate state abstraction.
\newblock In \emph{Proc.~of ICML}, pages 2915--2923, 2016.

\bibitem[Anand et~al.(2015)Anand, Grover, Mausam, and Singla]{anand2015asap}
A.~Anand, A.~Grover, Mausam, and P.~Singla.
\newblock {ASAP-UCT}: Abstraction of state-action pairs in {UCT}.
\newblock In \emph{Proc.~of IJCAI}, pages 1509--1515, 2015.

\bibitem[Auer et~al.(2002)Auer, Cesa-Bianchi, and Fischer]{auer2002finite}
P.~Auer, N.~Cesa-Bianchi, and P.~Fischer.
\newblock Finite time analysis of the multiarmed bandit problem.
\newblock \emph{Machine Learning}, 47\penalty0 (2-3):\penalty0 235--256, 2002.

\bibitem[Bartlett and Tewari(2009)]{bartlett2009regal}
P.~L. Bartlett and A.~Tewari.
\newblock \textsc{REGAL}: A regularization based algorithm for reinforcement
  learning in weakly communicating {MDP}s.
\newblock In \emph{Proc.~of UAI}, pages 35--42, 2009.

\bibitem[Brunskill and Li(2013)]{brunskill2013sample}
E.~Brunskill and L.~Li.
\newblock Sample complexity of multi-task reinforcement learning.
\newblock In \emph{Proc.~of UAI}, page 122, 2013.

\bibitem[Burnetas and Katehakis(1997)]{burnetas1997optimal}
A.~N. Burnetas and M.~N. Katehakis.
\newblock Optimal adaptive policies for {M}arkov decision processes.
\newblock \emph{Mathematics of Operations Research}, 22\penalty0 (1):\penalty0
  222--255, 1997.

\bibitem[Dann et~al.(2017)Dann, Lattimore, and Brunskill]{dann2017unifying}
C.~Dann, T.~Lattimore, and E.~Brunskill.
\newblock Unifying {PAC} and regret: Uniform {PAC} bounds for episodic
  reinforcement learning.
\newblock In \emph{Proc.~of NIPS}, pages 5711--5721, 2017.

\bibitem[Dean et~al.(1997)Dean, Givan, and Leach]{dean1997model}
T.~Dean, R.~Givan, and S.~Leach.
\newblock Model reduction techniques for computing approximately optimal
  solutions for {M}arkov decision processes.
\newblock In \emph{Proc.~of UCAI}, pages 124--131, 1997.

\bibitem[Diuk et~al.(2009)Diuk, Li, and Leffler]{diuk2009adaptive}
C.~Diuk, L.~Li, and B.~R. Leffler.
\newblock The adaptive k-meteorologists problem and its application to
  structure learning and feature selection in reinforcement learning.
\newblock In \emph{Proc.~of ICML}, pages 249--256, 2009.

\bibitem[Ferns et~al.(2004)Ferns, Panangaden, and Precup]{ferns2004metrics}
N.~Ferns, P.~Panangaden, and D.~Precup.
\newblock Metrics for finite {M}arkov decision processes.
\newblock In \emph{Proc.~of UAI}, pages 162--169, 2004.

\bibitem[Ferns et~al.(2011)Ferns, Panangaden, and
  Precup]{ferns2011bisimulation}
N.~Ferns, P.~Panangaden, and D.~Precup.
\newblock Bisimulation metrics for continuous {M}arkov decision processes.
\newblock \emph{SIAM Journal on Computing}, 40\penalty0 (6):\penalty0
  1662--1714, 2011.

\bibitem[Fruit et~al.(2018)Fruit, Pirotta, Lazaric, and
  Ortner]{fruit2018efficient}
R.~Fruit, M.~Pirotta, A.~Lazaric, and R.~Ortner.
\newblock Efficient bias-span-constrained exploration-exploitation in
  reinforcement learning.
\newblock In \emph{Proc.~of ICML}, pages 1573--1581, 2018.

\bibitem[Gheshlaghi~Azar et~al.(2017)Gheshlaghi~Azar, Osband, and
  Munos]{azar2017minimax}
M.~Gheshlaghi~Azar, I.~Osband, and R.~Munos.
\newblock Minimax regret bounds for reinforcement learning.
\newblock In \emph{Proc.~of ICML}, pages 263--272, 2017.

\bibitem[Givan et~al.(2003)Givan, Dean, and Greig]{givan2003equivalence}
R.~Givan, T.~Dean, and M.~Greig.
\newblock Equivalence notions and model minimization in {M}arkov decision
  processes.
\newblock \emph{Artificial Intelligence}, 147\penalty0 (1-2):\penalty0
  163--223, 2003.

\bibitem[Hallak et~al.(2013)Hallak, Di-Castro, and Mannor]{hallak2013model}
A.~Hallak, D.~Di-Castro, and S.~Mannor.
\newblock Model selection in {M}arkovian processes.
\newblock In \emph{Proc.~of ACM SIGKDD}, pages 374--382, 2013.

\bibitem[Hoeffding(1963)]{hoeffding1963probability}
W.~Hoeffding.
\newblock Probability inequalities for sums of bounded random variables.
\newblock \emph{Journal of the American Statistical Association}, 58\penalty0
  (301):\penalty0 13--30, 1963.

\bibitem[Jaksch et~al.(2010)Jaksch, Ortner, and Auer]{jaksch2010near}
T.~Jaksch, R.~Ortner, and P.~Auer.
\newblock Near-optimal regret bounds for reinforcement learning.
\newblock \emph{JMLR}, 11:\penalty0 1563--1600, 2010.

\bibitem[Kakade(2003)]{kakade2003phd}
S.~Kakade.
\newblock \emph{On the sample complexity of reinforcement learning}.
\newblock PhD thesis, University of London London, England, 2003.

\bibitem[Khaleghi et~al.(2016)Khaleghi, Ryabko, Mary, and
  Preux]{khaleghi2016consistent}
A.~Khaleghi, D.~Ryabko, J.~Mary, and P.~Preux.
\newblock Consistent algorithms for clustering time series.
\newblock \emph{JMLR}, 17\penalty0 (1):\penalty0 94--125, 2016.

\bibitem[Lai and Robbins(1985)]{lai1985asymptotically}
T.~L. Lai and H.~Robbins.
\newblock Asymptotically efficient adaptive allocation rules.
\newblock \emph{Advances in Applied Mathematics}, 6\penalty0 (1):\penalty0
  4--22, 1985.

\bibitem[Leffler et~al.(2007)Leffler, Littman, and
  Edmunds]{leffler2007efficient}
B.~R. Leffler, M.~L. Littman, and T.~Edmunds.
\newblock Efficient reinforcement learning with relocatable action models.
\newblock In \emph{Proc.~of AAAI}, volume~7, pages 572--577, 2007.

\bibitem[Li et~al.(2006)Li, Walsh, and Littman]{li2006towards}
L.~Li, T.~J. Walsh, and M.~L. Littman.
\newblock Towards a unified theory of state abstraction for {MDP}s.
\newblock In \emph{ISAIM}, 2006.

\bibitem[Maillard(2019)]{maillard2019mathematics}
O.-A. Maillard.
\newblock Mathematics of statistical sequential decision making.
\newblock \emph{Habilitation {\`a} Diriger des Recherches}, 2019.

\bibitem[Maillard et~al.(2014)Maillard, Mann, and Mannor]{maillard2014hard}
O.-A. Maillard, T.~A. Mann, and S.~Mannor.
\newblock How hard is my {MDP}? ``the distribution-norm to the rescue''.
\newblock In \emph{Proc.~of NIPS}, pages 1835--1843, 2014.

\bibitem[Mandel et~al.(2016)Mandel, Liu, Brunskill, and
  Popovic]{mandel2016efficient}
T.~Mandel, Y.-E. Liu, E.~Brunskill, and Z.~Popovic.
\newblock Efficient {B}ayesian clustering for reinforcement learning.
\newblock In \emph{Proc.~of IJCAI}, pages 1830--1838, 2016.

\bibitem[Ortner(2013)]{ortner2013adaptive}
R.~Ortner.
\newblock Adaptive aggregation for reinforcement learning in average reward
  {M}arkov decision processes.
\newblock \emph{Annals of Operations Research}, 208\penalty0 (1):\penalty0
  321--336, 2013.

\bibitem[Ortner et~al.(2014)Ortner, Maillard, and Ryabko]{ortner2014selecting}
R.~Ortner, O.-A. Maillard, and D.~Ryabko.
\newblock Selecting near-optimal approximate state representations in
  reinforcement learning.
\newblock In \emph{Proc.~of ALT}, pages 140--154, 2014.

\bibitem[Pe{\~n}a et~al.(2008)Pe{\~n}a, Lai, and Shao]{pena2008self}
V.~H. Pe{\~n}a, T.~L. Lai, and Q.-M. Shao.
\newblock \emph{Self-normalized processes: Limit theory and Statistical
  Applications}.
\newblock Springer Science \& Business Media, 2008.

\bibitem[Puterman(2014)]{puterman2014markov}
M.~L. Puterman.
\newblock \emph{Markov decision processes: discrete stochastic dynamic
  programming}.
\newblock John Wiley \& Sons, 2014.

\bibitem[Ravindran and Barto(2004)]{ravindran2004approximate}
B.~Ravindran and A.~G. Barto.
\newblock Approximate homomorphisms: A framework for non-exact minimization in
  {M}arkov decision processes.
\newblock In \emph{Proc.~of KBCS}, 2004.

\bibitem[Strehl and Littman(2008)]{strehl2008analysis}
A.~L. Strehl and M.~L. Littman.
\newblock An analysis of model-based interval estimation for {M}arkov decision
  processes.
\newblock \emph{Journal of Computer and System Sciences}, 74\penalty0
  (8):\penalty0 1309--1331, 2008.

\bibitem[Sutton and Barto(1998)]{sutton1998reinforcement}
R.~S. Sutton and A.~G. Barto.
\newblock \emph{Reinforcement learning: An introduction}.
\newblock MIT Press Cambridge, 1998.

\bibitem[Talebi and Maillard(2018)]{talebi2018variance}
M.~S. Talebi and O.-A. Maillard.
\newblock Variance-aware regret bounds for undiscounted reinforcement learning
  in {MDP}s.
\newblock In \emph{Proc.~of ALT}, pages 770--805, 2018.

\bibitem[Weissman et~al.(2003)Weissman, Ordentlich, Seroussi, Verdu, and
  Weinberger]{weissman2003inequalities}
T.~Weissman, E.~Ordentlich, G.~Seroussi, S.~Verdu, and M.~J. Weinberger.
\newblock Inequalities for the {L1} deviation of the empirical distribution.
\newblock \emph{Hewlett-Packard Labs, Technical Report}, 2003.

\end{thebibliography}

\end{document}